\titlespacing*{\paragraph}{0pt}{0pt plus 1ex}{1em}
\titlespacing*{\subsubsection}{0pt}{0pt plus 1ex}{0pt}
\titlespacing*{\subsection}{0pt}{0pt plus 1ex}{0pt}
\titlespacing*{\section}{0pt}{2pt plus 1ex}{0pt}
\title{
    Learning Temporal Distances: Contrastive Successor Features Can Provide a Metric Structure for Decision-Making
}
\begin{document}

\makeatletter
\twocolumn[

    \icmltitle{\@title}

    \begin{icmlauthorlist}
        \icmlauthor{Vivek Myers}{berkeley}
        \icmlauthor{Chongyi Zheng}{princeton}
        \icmlauthor{Anca Dragan}{berkeley}
        \icmlauthor{Sergey Levine}{berkeley}
        \icmlauthor{Benjamin Eysenbach}{princeton}
    \end{icmlauthorlist}

    \icmlaffiliation{berkeley}{University of California, Berkeley}
    \icmlaffiliation{princeton}{Princeton University}

    \icmlcorrespondingauthor{Vivek Myers}{vmyers@berkeley.edu}

        \icmlkeywords{Contrastive Learning, Triangle Inequality, Quasimetric, Reinforcement Learning, Representation Learning}

    \vskip 0.3in
]
\makeatother

\printAffiliationsAndNotice{} 
\begin{abstract}
    Temporal distances lie at the heart of many algorithms for planning, control, and reinforcement learning that involve reaching goals, allowing one to estimate the transit time between two states.
However, prior attempts to define such temporal distances in stochastic settings have been stymied by an important limitation: these prior approaches do not satisfy the triangle inequality.
This is not merely a definitional concern, but translates to an inability to generalize and find shortest paths.
In this paper, we build on prior work in contrastive learning and quasimetrics to show how successor features learned by contrastive learning (after a change of variables) form a temporal distance that does satisfy the triangle inequality, even in stochastic settings.
Importantly, this temporal distance is computationally efficient to estimate, even in high-dimensional and stochastic settings.
Experiments in controlled settings and benchmark suites demonstrate that an RL algorithm based on these new temporal distances exhibits combinatorial generalization (i.e., ``stitching'') and can sometimes learn more quickly than prior methods, including those based on quasimetrics.

\end{abstract}

\definecolor{myOrange}{RGB}{198, 117, 61}
\definecolor{myGreen}{RGB}{    0, 123, 125}

\begin{figure*}[t]
    \centering
    \vspace*{-1ex}
    \makebox[\linewidth][c]{
        \captionsetup[sub]{labelfont=bf}
        \begin{subfigure}[t]{0.45\linewidth}
            \centering
            \includestandalone[width=\linewidth]{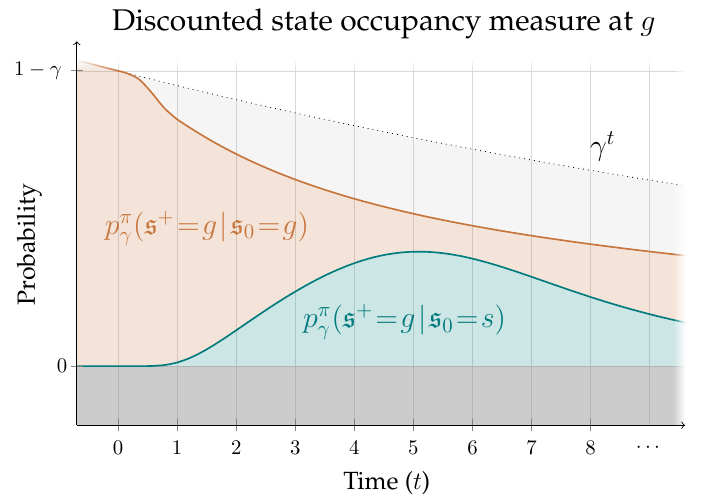}
            \subcaption{
            Starting at state $s$, we visualize the (discouted) probability of reaching state $g$ after exactly $t$ steps ({\color{myGreen}\gscolorname}). The sum of these probabilities ({\color{myGreen}$\blacksquare$ area}) is the probability of reaching state $g$ at some point in the future. Our method defines the distance between states $s$ and $g$ as the difference in these shaded areas ({\color{myGreen}$\blacksquare$ area} - {\color{myOrange}$\blacksquare$ area}).             }
        \end{subfigure}
                
        \kern 2em
        \begin{subfigure}[t]{0.5\linewidth}
            \centering
            \includestandalone[width=\linewidth]{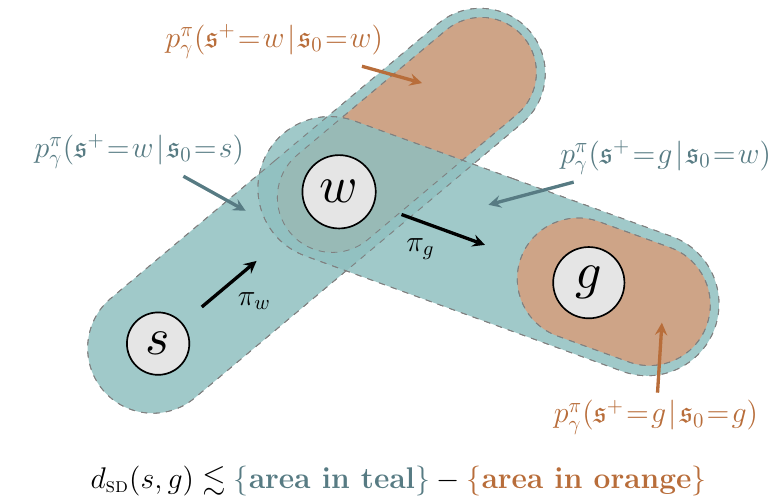}
            \subcaption{
            Our proposed distance obeys the triangle inequality. Starting at state $s$, we look at the distribution over future states ({\color{myGreen}$\blacksquare$ area}) and subtract off those states that the policy would reach starting from $w$ ({\color{myOrange}$\blacksquare$ area}). Our distance is defined as the difference in these areas, $d_\text{SD}(s, w) \triangleq {\color{myGreen}\blacksquare} - {\color{myOrange}\blacksquare}$.
            }
        \end{subfigure}
                                }
    \vspace*{-2ex}
    \caption{\makebox[0.85\linewidth][l]{An overview of our theoretical distance construction as well as the concrete implementation with metric distillation.}}
    \label{fig:overview}
\end{figure*}

\section{Introduction}
\label{sec:introduction}

Graph search is one of the most important ideas in CS, being introduced in almost every introductory CS class. However, classes often overlook a key assumption: that transitions be deterministic. With deterministic transitions, shortest-path lengths obey the triangle inequality. This property, encoded into dynamic programming algorithms, allows one to search over an exponential number of paths and find the shortest in polynomial time. This property also allows for generalization, finding new paths unseen in the data. However, in graphs (or, more generally, Markov processes) with stochastic transitions, it is unclear how to define the distance between two states such that this distance obeys the triangle inequality.

A reasonable solution for goal-reaching is to learn \emph{temporal distances}, which reflect some notion of transit time between states~\citep{venkattaramanujam2019self, savinov2018semi, durugkar2021adversarial, ma2023vip, hartikainen2019dynamical}. However, simply defining distances as hitting times breaks down in stochastic settings, as shown in prior work~\citep{akella2023distributional}.
Stochastic settings are ubiquitous in real-world problems: from autonomous vehicles navigating around drunk bar-goers, to healthcare systems rife with unobservable features.
Indeed, many advances in ML over the last decade have been predicated on probabilistic models (e.g.,  diffusion models, VAEs), so it seems rather anachronistic that an important control primitive (the notion of distances) is not well defined in a probabilistic sense.

The key challenge is that the prior notions of temporal distance break down in stochastic settings.
Nonetheless, the triangle inequality holds great appeal as a strong inductive bias for learning temporal distances: the distance between two states should be less than the length of a path that goes through a particular waypoint state.
Indeed, prior work has aimed to exploit this notion by learning ``temporal distance metrics'' that can broadly generalize from less data.

The starting point for our work is to think about distances probabilistically.
Because the dynamics may be stochastic, the number of steps it takes to traverse between two states is not a definite quantity, but rather a random variable.
To estimate the (long-term) probabilities of transiting between two states, we will build on prior temporal contrastive learning~\citep{van2019representation,eysenbach2022contrastive}, a popular and stable class of time series representation learning methods.
Intuitively, these methods learn representations from time series data so that observations that occur nearby in time are given similar representations.
Importantly, contrastive methods based on NCE and infoNCE have a probabilistic interpretation, making them ripe for application to stochastic environments.
Like prior work~\cite{eysenbach2022contrastive,eysenbach2023contrastive}, we account for the arrow of time~\cite{popper1956arrow} by using asymmetric representations, allowing the learned representations reflect the fact that (say) climbing up a mountain is more difficult from sliding back down.
The representations learned by these temporal contrastive learning methods do not themselves satisfy the triangle inequality.
However, we prove that a simple change of variables results in representations that do satisfy the triangle inequality.
Intuitively, this change of variables corresponds to subtracting off the ``distance'' between a state and itself.
Note that because the representations are asymmetric (see above), this extra ``distance'' is not zero, but rather corresponds to the likelihood of returning to the current state at some point in the future.

The main contribution of this paper is to propose a notion of temporal distance that provably satisfies the triangle inequality, even in stochastic settings.
Our constructed temporal distance is easy to learn -- simply take the features from (temporal) contrastive learning and perform a change of variables -- no additional training required!
After introducing and analyzing our proposed temporal distance, we demonstrate an application of our temporal distance to goal-conditioned reinforcement learning, using the distance function as a value function.
We use a carefully controlled synthetic benchmark to test properties such as combinatorial generalization, temporal generalization, and finding shortest paths; our results here show that the proposed distance has appealing properties that prior methods lack.
We also show that the RL method based on our distances can scale to 111-dimensional locomotion tasks, where it is competitive with prior methods on a parameter-adjusted basis.

\section{Related Work}
\label{sec:background}

Our work builds on prior work in learning temporal distances and contrastive representation learning.

\subsection{Learning distances}

Within any Markov decision process (MDP), there is an intuitive notion of ``distance'' between states as the difficulty of transitioning between them.
There are many seemingly reasonable definitions for distance a priori: likelihood of reaching the goal at a particular time, expected time to reach the goal, likelihood of ever reaching the goal, etc.
(under some policy).
The key mathematical structure for a distance to be useful for reaching goals is that it must satisfy the triangle inequality $d(a,c)\leq d(a,b)+d(b,c)$: being able to go from $a\to b$ and from $b\to c$ means going from $a\to c$ can be no harder than both of the aforementioned steps.
Such a distance is called a \textit{metric} over the state space if it is symmetric and more generally a \textit{quasimetric} \cite{paluszy1931quasi}.

While prior work on bisimulation~\citep{hansen2022bisimulation, ferns2011bisimulation} use a reward function to construct such a distance, our aim will be to define a notion of distance that does not require a reward function.

For the correct choice of distance, learning a goal-conditioned value function will correspond to selecting a distance metric that best enables goal reaching.
Such a distance can then be learned with an architecture that directly enforces metric properties, e.g.,
Euclidean distance, metric residual network (MRN), interval quasimetric estimator (IQE), etc.
\cite{wang2022improved,wang2022learning,liu2023metric}.
Since the space of value (quasi)metrics imposes a strong induction bias over value functions, using the right metric architecture can enable better combinatorial and temporal generalization \textit{without} requiring additional samples \cite{wang2023optimal}.

In deterministic MDPs, these notions of distance all coincide with distance $d(s,g)$ being proportional to the (minimum) amount of time needed to reach the goal $g$ when starting in state $s$.
Approaches like Quasimetric RL \cite{wang2023optimal,liu2023metric} learn this notion of distance, allowing optimal goal reaching in deterministic MDPs.
In general MDPs, alternative notions of distance are required \cite{akella2023distributional,n2013decision,ma2023vip,n2013decision,lelanMetricsContinuityReinforcement2021,hejna2023distance}.
Existing approaches are often limited by assumptions such as symmetry or fail to satisfy metric properties.
Our contribution is to construct a general formulation for a quasimetric over MDPs that can be easily learned from discounted state occupancy measures.

\subsection{Contrastive Representations}

Contrastive learning has seen widespread adoption for learning to represent time series \cite{van2019representation,mikolov2013distributed}.
These representations can be trained to approximate mutual information without requiring labels or reconstruction \cite{gutmann2010noise,mazoure2020deep,wu2021rethinking,van2019representation,gutmann2012noise}, and are useful for learning self-supervised representations across broad application areas \cite{radford2021learning,sermanet2017time,qian2021spatiotemporal,chen2020international,chen2021intriguing,saunshi2019theoretical,wang2020understanding, saunshi2019theoretical}.

Within RL, contrastive learning can be used for goal-conditioned control  as successor features \cite{barreto2017neural,eysenbach2022contrastive,eysenbach2023contrastive}.
Approaches that use contrastive representations for control are typically limited in combinatorial and temporal generalization since they do not bootstrap value functions \cite{zheng2023contrastive}.
Unlike past approaches that use contrastive learning for decision-making, we show that these generalization capabilities \emph{can} be obtained from contrastive successor features by imposing an additional metric structure.

\subsection{Goal-conditioned reinforcement learning (GCRL)}

Goal-reaching presents an attractive formulation for learning useful behaviors in unsupervised RL settings \cite{lairdSOARArchitectureGeneral1987,kaelblingLearningAchieveGoals}.
Recent advances in deep reinforcement learning have renewed interest in this problem as many real-world offline and online RL problems lack clear reward signals \cite{andrychowicz2017neural,eysenbach2022contrastive,park2023hiql,yang2023what,ghosh2019learning}.
GCRL methods can learn goal-conditioned policies \cite{yang2022rethinking,ghosh2021learning}, value functions \cite{eysenbach2021international,ghosh2023reinforcement}, and/or representations that enable goal-reaching \cite{eysenbach2022contrastive,zheng2023contrastive,ma2023vip}.
Approaches that recover goal-conditioned policies can also enable additional capabilities like planning \cite{fang2023generalization,chane2021goal}, skill discovery \cite{mendonca2021discovering,park2023metra} and interface with other forms of task specification like language \cite{ma2023liv,myers2023goal,shah2023mutex,black2023zero,touati2021learning}.

These GCRL techniques typically require bootstrapping with a learned value function, which can be costly and unstable, or struggle with long-horizon combinatorial and temporal generalization \cite{ghugare2024closing}.
Our approach avoids both of these shortcomings by learning a distance metric that can implicitly combine behaviors without bootstrapping or making any assumptions about the environment dynamics.

\section{General distances for goal-reaching}
\label{sec:distances}

In this section, we introduce a novel distance metric for goal-reaching in controlled Markov processes.
We show that this distance is a quasimetric, i.e., a metric that relaxes the assumption of symmetry.
In the subsequent section (\ref{sec:contrastive}), we show that this distance construction can enable additional generalization capabilities through a choice of model parameterization for temporal contrastive learning.

\subsection{Preliminaries}
We consider a discrete \textit{controlled Markov process} $M$ consisting of states $s\in\S$, actions $a\in \A$, dynamics $\dyn(s'\mid s,a)$, initial state distribution $p_0(\so=s_0)$, and a discount $\y\in(0,1)$.

By augmenting $M$ with the reward for any fixed goal $g\in\S$, which we define as $$r_g(s)=(1-\y)\,\delta_{g}(s),$$ where $\delta_{g}(s) = \Bigl\{\smqty{
        1 & \text{if } s=g\hfill   \\
        0 & \text{otherwise}\hfill
    }$ is the Kronecker delta,  we can extend $M$ to a goal-dependent Markov decision process $M_g$.
Denote by $\spol$ the (compact) set of stationary of policies $\pi(a\mid s)$ on $M$.
We also define $\nspol\supset\spol$ to be the set of non-Markovian policies $\pi(a_t\mid s_0\ldots s_t)$.
We can then derive the optimal goal-conditioned value function,\begin{equation}
    V_g^*(s) = \max_{\pi\in\Pi} \ppiy(\sp\smeq g \mid \srv_0\smeq s),
    \label{eq:goal_value}
\end{equation}
where the \textit{discounted state occupancy measure} $\ppiy$ is defined as the discounted distribution over future states $\srv^{+}$, \begin{align*}
     & \ppiys{s'}{s}
    = {(1-\gamma)}\, \sum_{k=0}^\infty \y^k \ppi(\srv_{k}\smeq s'\smid \so\smeq s)                                                                   \\
     & \text{where}\,\ \ppi(\srv_{t+1}\smeq s'\smid \srv_t=s) \smeq \sum_{a\in\A} \pi(a\smid s)\dyn(s'\smid s,a). \eqmark\label{eq:succesor_density}
\end{align*}
i.e., the distribution of $\srv^{+} \triangleq \srv_K$ for $K\sim \operatorname{Geom}(1-\gamma)$.

Here, $\srv_{t}$ denotes the state at time $t$ as a random variable, and $\sp$ denotes the state at a geometrically distributed time in the future. When needed, under a policy $\pi$, we will additionally use the notation $\arv_t$ and  $\arv^+$ to denote actions as random variables, defined analogously to $\srv_t$ and $\srv^+$.

Since \eqref{eq:goal_value} is the optimal value function corresponding to the reward $r_g$, there will always be a stationary optimal goal-reaching policy $\pi^g\in\spol$ that attains the max in \eqref{eq:goal_value}.

We can additionally view the setting of an \textit{uncontrolled Markov process} (i.e., a Markov chain) as a special case of controlled Markov processes where there is a single action $\A=\{a\}$ with a fixed policy $\Pi=\{\pi\}$.

To reason about the effects of actions, we can also consider the natural generalization of the \textit{successor state-action distribution}, which is the distribution over future states and actions $s',a'$ given that action $a$ is taken in state $s$ under $\pi$:
\begin{align*}
     & \ppiysa{g,a'}{s,a} =                                                                                     \\
     & \qquad (1-\gamma)\delta_{s,a}(g,a') + (1-\gamma)\gamma\Bigl[ \sum_{k=0}^{\infty} \sum_{s'\in\S}          \\
     & \qquad \gamma^{k} \ppi \bigl( \srv_{k}=g \mid \srv_0=s' \bigr) \pi(a' \mid g) \dyn(s' \mid s, a) \Bigr].
    \eqmark \label{eq:sa_discounted_def}
\end{align*}

Finally, we recall the definition of a quasimetric space:
\begin{definition}
    A quasimetric on $S$ is a function $d:S\times S\to \mathbb{R}$ satisfying the following for any $x, y, z \in S$.
    \begin{parts}
        \item[Positivity:] $d(x,y)\geq 0$
        \item[Identity:] $d(x,y)=0 \iff x = y$
        \item[Triangle inequality:] $d(x,z)\leq d(x,y) + d(y,z)$
    \end{parts}
    \label{def:quasimetric}
\end{definition}

\subsection{Our Proposed Temporal Distance}

With these definitions in place, we can now define the proposed temporal distance. We will start by describing a ``strawman'' approach, and then proceed with the full method.

Motivated by prior work on successor representations~\citep{dayan1993improving} and self-predictive representations~\citep{schwarzer2020data,ni2024bridging}, a candidate temporal distance is to directly use the critic function from temporal contrastive learning. When positive examples are sampled from the discounted state occupancy measure, this critic has the following form:
\begin{equation}
    -d(s, g) = \log \left( \tfrac{\ppiys{g}{s}}{p(g)} \right). \tag{not a quasimetric}
\end{equation}

However, a distance defined in this way does not satisfy the identity property of the quasimetric; namely, the distance between a state and itself can be non-zero.
Our solution is to subtract off the ``extra distance" between a state and itself, $\tilde{d}(s, g) = d(s, g) - d(g, g)$.
Doing this results in the proposed temporal distance that we propose in this paper.

We now proceed with our main definition, which is a temporal distance that obeys the triangle inequality (and is a quasimetric) even in stochastic settings.
We provide two definitions, one for controlled Markov processes and one for (uncontrolled) Markov processes:

\begin{tcolorbox} [colback=white!95!black]
    \begin{definition}
        We define the \textbf{successor distance} for a controlled Markov processes by:
        \begin{equation}
            \dss(s, g) \triangleq \min_{\pi\in\pols} \log \paren[\Bigg]{\frac{\ppiys{g}{g}}{\ppiys{g}{s}}}, \label{eq:basic_quasimetric_definition}
        \end{equation}
                As a special case for an uncontrolled Markov process, we can define:
        \begin{equation}
            \dss(s,g) \triangleq \log \paren[\Bigg]{\frac{\ppiys[]{g}{g}}{\ppiys[]{g}{s}}}.
            \label{eq:basic_quasimetric_definition_uncontrolled}
            \end{equation}
    \end{definition}
\end{tcolorbox}

To use these distances for control in model-free settings, we can extend this notion to include actions, yielding a distance over $\S\times \A$.
\begin{definition}
    We define the \textbf{successor distance with actions} for a controlled Markov process by:
    \begin{align*}
        \dss & \bigl((s, a), (g, a')\bigr) \triangleq                                                       \\
             & \quad \min_{\pi\in\pols} \biggl(\log \frac{\ppiysa{g,a'}{g,a'}}{\ppiysa{g,a'}{s,a}}\biggr).
        \eqmark
        \label{eq:basic_quasimetric_definition_actions}
    \end{align*}
\end{definition}
We make two brief lemmas about this definition; the proofs can be found in \cref{app:dist_remarks}.
Within $\S \times  \A$,
we can also say:
\begin{restatable}{lemma}{sa_indep}
    \label{thm:sa_indep}
    $\dss\bigl( (s,a), (s',a') \bigr) $  is independent of $a'$ when $s \neq s'$.
\end{restatable}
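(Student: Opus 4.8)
The plan is to rewrite the ratio $\ppiysa{g,a'}{g,a'}\big/\ppiysa{g,a'}{s,a}$ as the reciprocal of a discounted hitting probability, and then observe that when $s\ne g$ the $\min_{\pi\in\pols}$ decouples into a trivial optimization over the action distribution $\pi(\cdot\mid g)$ at the goal (whose optimal value is $1$, independent of $a'$) times an optimization over the rest of the policy whose objective never mentions $a'$.

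First I would set up the regeneration step. The pair process $(\srv_t,\arv_t)$ is a time-homogeneous Markov chain on $\S\times\A$, and $\ppiysa{g,a'}{s,a}$ is precisely its discounted occupancy measure, i.e.\ $(1-\gamma)\sum_{t\ge0}\gamma^t\,\Pr^\pi[(\srv_t,\arv_t)=(g,a')\mid(\srv_0,\arv_0)=(s,a)]$, which is just a rearrangement of the series in \eqref{eq:sa_discounted_def}. Splitting each term at the first hitting time $T\triangleq\inf\{t\ge0:(\srv_t,\arv_t)=(g,a')\}$ and using time-homogeneity yields the renewal identity $\ppiysa{g,a'}{s,a}=h^\pi\!\big((g,a')\mid(s,a)\big)\,\ppiysa{g,a'}{g,a'}$, where $h^\pi\!\big((g,a')\mid(s,a)\big)\triangleq\mathbb{E}^\pi\!\big[\gamma^{T}\mid(\srv_0,\arv_0)=(s,a)\big]$ with the convention $\gamma^{\infty}\triangleq0$. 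Since $\ppiysa{g,a'}{g,a'}\ge 1-\gamma>0$, this gives $\dss\!\big((s,a),(g,a')\big)=-\log\max_{\pi\in\pols}h^\pi\!\big((g,a')\mid(s,a)\big)$.

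Next I would factor that hitting probability through the goal \emph{state}. Because $s\ne g$, we have $T\ge1$, so the first visit to $(g,a')$ is no earlier than the first visit $\tau\triangleq\inf\{t\ge0:\srv_t=g\}$ to the state $g$, and $\srv_\tau=g$ on $\{\tau<\infty\}$. The strong Markov property at $\tau$ then factors $h^\pi\!\big((g,a')\mid(s,a)\big)=\rho^\pi\cdot h^\pi\!\big(g\mid(s,a)\big)$, where $\rho^\pi\triangleq\mathbb{E}^\pi[\gamma^{T}\mid\srv_0=g]$ and $h^\pi\!\big(g\mid(s,a)\big)\triangleq\mathbb{E}^\pi[\gamma^{\tau}\mid(\srv_0,\arv_0)=(s,a)]$. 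Two structural facts make the argument go through: (i)~before time $\tau$ the state is never $g$, so $h^\pi\!\big(g\mid(s,a)\big)$ depends only on the restriction $\pi_{-g}$ of $\pi$ to states $\ne g$; and (ii)~$\rho^\pi\le1$ always, with equality whenever $\pi(a'\mid g)=1$ (then $T=0$ from state $g$). Writing $\pi=(\pi(\cdot\mid g),\pi_{-g})$ and maximizing, $\max_{\pi}h^\pi\!\big((g,a')\mid(s,a)\big)=\max_{\pi_{-g}}\!\big(h^{\pi_{-g}}\!\big(g\mid(s,a)\big)\cdot\max_{\pi(\cdot\mid g)}\rho^\pi\big)=\max_{\pi_{-g}}h^{\pi_{-g}}\!\big(g\mid(s,a)\big)$, since the inner maximum is $1$, attained at the point mass on $a'$ regardless of $\pi_{-g}$. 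The right-hand side does not involve $a'$, so $\dss\!\big((s,a),(g,a')\big)$ is independent of $a'$.

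The hard part will be the probabilistic bookkeeping in the first two steps: justifying the renewal and strong-Markov factorizations carefully with the $\gamma^{\infty}=0$ convention, and --- the real crux --- verifying that $h^\pi\!\big(g\mid(s,a)\big)$ genuinely does not depend on $\pi(\cdot\mid g)$, which is exactly where the hypothesis $s\ne g$ is used (for $s=g$ the very first step already commits to an action at $g$, and the statement fails). If one prefers to avoid stopping times, the same conclusion follows by direct computation: for $s\ne g$, \eqref{eq:sa_discounted_def} collapses to $\ppiysa{g,a'}{s,a}=\gamma\,\pi(a'\mid g)\sum_{s'}\dyn(s'\mid s,a)\,\ppiys{g}{s'}$, and combining this with the first-step identity $\ppiys{g}{g}=(1-\gamma)+\gamma\sum_b\pi(b\mid g)\sum_{s'}\dyn(s'\mid g,b)\,\ppiys{g}{s'}$ and the renewal identity $\ppiys{g}{s'}=h^\pi(g\mid s')\,\ppiys{g}{g}$ reduces the ratio to the explicit form $\big(1-\sum_{b\ne a'}\pi(b\mid g)\,c_b\big)\big/\big(\gamma\,e\,\pi(a'\mid g)\big)$ with $c_b$ and $e$ depending on $\pi_{-g}$ only; minimizing over $\pi(\cdot\mid g)$ forces $\pi(a'\mid g)=1$ and leaves $1/(\gamma e)$, once again free of $a'$. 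I would present the probabilistic version as the main proof and put the algebraic bookkeeping in the appendix.
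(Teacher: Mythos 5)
Your proof is correct and follows essentially the same approach as the paper's: both regenerate at the first visit to state $g$, observe that the pre-hit portion of the trajectory never queries $\pi(\cdot\mid g)$, and conclude that the optimal choice $\pi(\cdot\mid g)=\delta_{a'}$ makes the remaining goal-to-goal factor trivial and hence independent of $a'$. Your multiplicative factorization of the discounted hitting probability, $h^\pi\bigl((g,a')\mid(s,a)\bigr)=\rho^\pi\cdot h^\pi\bigl(g\mid(s,a)\bigr)$, is a slightly cleaner restatement of the additive hitting-time decomposition the paper records as a remark in the appendix, but the underlying mechanism is identical.
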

In light of this independence, we denote $\dss\bigl( s,a,s' \bigr) \triangleq \dss\bigl( (s,a), (s',a') \bigr)$ where applicable. Selecting actions that minimizes this distance corresponds to policy improvement:
\begin{restatable}{lemma}{state_min_optimal}
    \label{thm:state_min_optimal}
    Selecting actions to minimize the successor distance is equivalent to selecting actions to maximize the (scaled and shifted) Q-function:
    \begin{align*}
         & -\dss(s, a, g) = \tfrac{1}{p_g(g)}Q(s, a, g) + c_\psi(g)   \\
         & \implies \arg\max_a \dss(s, a, g) = \arg\max_a Q(s, a, g).
    \end{align*}
\end{restatable}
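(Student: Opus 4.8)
The plan is to peel the $\min_{\pi\in\pols}$ and the $\log$-ratio off of $\dss$ with a first-hitting (renewal) decomposition, which turns $-\dss(s,a,g)$ into the value of a plain discounted-reachability control problem, and then to recognize that value as a monotone rescaling of the goal-conditioned action-value $Q(s,a,g)$. Since only the dependence on $a$ and the $\arg\max$ over $a$ matter, I fix an arbitrary target action $a'$ throughout, so that $\dss(s,a,g)=\dss((s,a),(g,a'))$; by \cref{thm:sa_indep} this is harmless, and the argument below in fact reproves that independence. I also treat $s=g$ separately, where the statement is vacuous for control (no action needs to be compared at the goal).

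First I would expand the occupancy as $\ppiysa{g,a'}{s,a} = (1-\y)\,\mathbb{E}^\pi\bigl[\sum_{t\ge 0}\y^{t}\mathbbm{1}\bigl((\srv_t,\arv_t)=(g,a')\bigr)\,\big|\,\srv_0=s,\arv_0=a\bigr]$ and apply the strong Markov property at the first hitting time $H\triangleq\inf\{t\ge 0:(\srv_t,\arv_t)=(g,a')\}$, splitting the sum at $t=H$. With the convention $\y^{\infty}=0$ this gives
\begin{equation*}
	\ppiysa{g,a'}{s,a}\;=\;\mathbb{E}^\pi\!\bigl[\y^{H}\,\big|\,\srv_0=s,\arv_0=a\bigr]\cdot\ppiysa{g,a'}{g,a'},
\end{equation*}
so the ratio inside \eqref{eq:basic_quasimetric_definition_actions} is exactly $1/\mathbb{E}^\pi[\y^{H}\mid\srv_0=s,\arv_0=a]$, whence $\dss(s,a,g)=-\log\max_{\pi\in\pols}\mathbb{E}^\pi[\y^{H}\mid\srv_0=s,\arv_0=a]$. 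Reaching $(g,a')$ requires first reaching state $g$, so $H\ge H_g$ pointwise, where $H_g$ is the first hitting time of state $g$; conversely any policy playing $a'$ at $g$ --- a choice irrelevant to $H_g$ since $s\ne g$ --- attains $H=H_g$. Hence $\dss(s,a,g)=-\log\max_{\pi}\mathbb{E}^\pi[\y^{H_g}\mid\srv_0=s,\arv_0=a]$, manifestly independent of $a'$.

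It remains to identify $\max_{\pi}\mathbb{E}^\pi[\y^{H_g}\mid\srv_0=s,\arv_0=a]$ with $Q(s,a,g)$. The same renewal identity at the state level gives, for $s\ne g$,
\begin{equation*}
	\ppiys{g}{s,a}\;=\;\mathbb{E}^\pi\!\bigl[\y^{H_g}\,\big|\,\srv_0=s,\arv_0=a\bigr]\cdot\ppiys{g}{g},
\end{equation*}
i.e.\ the occupancy action-value for $r_g$ factors through the discounted hitting quantity and the return probability $\ppiys{g}{g}$. The stationary optimal goal-reaching policy $\pi^g$ from the preliminaries can be taken navigate-optimal --- the map $s'\mapsto\mathbb{E}^{\pi^g}[\y^{H_g}\mid\srv_0=s']$ solves the Bellman optimality equation of the first-hitting problem, obtained by dividing the Bellman optimality equation for $r_g$ (which $\ppiys[\pi^g]{g}{\cdot}$ solves) through by the positive constant $\ppiys[\pi^g]{g}{g}$ --- so $\pi^g$ attains the max above. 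Writing $p_g(g)\triangleq\ppiys[\pi^g]{g}{g}$ and taking $Q(s,a,g)\triangleq\ppiys[\pi^g]{g}{s,a}$ for the (optimal) goal-conditioned action-value, this yields $-\dss(s,a,g)=\log Q(s,a,g)-\log p_g(g)$. Thus $-\dss(s,a,g)$ is a strictly increasing function of $Q(s,a,g)$ up to a term depending only on $g$ --- this is the displayed relation $-\dss=\tfrac{1}{p_g(g)}Q+c_\psi(g)$ once $Q$ and the scalars are read through the paper's (log-scaled critic) parameterization of the $Q$-function --- and, since $\log$ is increasing, $\arg\min_a\dss(s,a,g)=\arg\max_a Q(s,a,g)$, the asserted equivalence between minimizing the successor distance and maximizing the $Q$-function.

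The main obstacle is dissolving the $\min_{\pi\in\pols}$: $\pi$ appears in the numerator, the denominator, and the action law at $g$ all at once, so it cannot be moved through naively. The renewal/strong-Markov decomposition is exactly what uncouples it, collapsing the ratio to the single objective $1/\mathbb{E}^\pi[\y^{H}]$ --- an ordinary discounted-reachability problem whose optimum is attained by a stationary policy, and specifically by $\pi^g$. The remaining pieces --- the $\y^\infty=0$ bookkeeping in the strong-Markov step, the degenerate $s=g$ case, checking that $\pi^g$ may be chosen navigate-optimal, and reconciling $p_g(g)$ and $c_\psi(g)$ with the paper's exact definitions --- are routine.
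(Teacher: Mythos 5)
Your argument reaches the right conclusion but by a genuinely different — and considerably more self-contained — route than the paper's. The paper's proof is a two-line appeal to the contrastive-learning machinery already established in \cref{sec:contrastive}: it cites the optimal-critic identity $e^{f_{\theta^*}(s,a,g)} = \ppiys{g}{s,a}/(C\,p_g(g))$ (Eq.~\ref{eq:contrastive_opt}, attributed to prior work) together with the decomposition $f_{\theta^*}(s,a,g) = -\dmd(s,a,g) + c_\psi(g)$ (Eq.~\ref{eq:successor_decomposition}), and then observes that the successor distance therefore differs from a monotone transformation of $Q(s,a,g) \triangleq \ppiys{g}{s,a}$ only by a $g$-dependent offset, which immediately gives the $\arg\max$ equivalence. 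You instead rederive the needed identity from first principles: the strong-Markov/renewal decomposition of the state-action occupancy measure gives $\ppiysa{g,a'}{s,a} = \mathbb{E}^\pi[\gamma^H]\cdot\ppiysa{g,a'}{g,a'}$, collapsing the ratio in \cref{eq:basic_quasimetric_definition_actions} to $1/\mathbb{E}^\pi[\gamma^H]$; you then replace $H$ with the pure state hitting time $H_g$ (in effect reproving \cref{thm:sa_indep} by the same mechanism as \cref{app:sa_indep}), and identify $\max_\pi\mathbb{E}^\pi[\gamma^{H_g}]\cdot\ppiys{g}{g}$ with $Q(s,a,g)$ via the stationary optimal goal-reaching policy $\pi^g$. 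You are thus re-establishing \cref{thm:hitting_time_actions} and the critic decomposition rather than quoting them. Both approaches are sound; the paper's is shorter because it amortizes the renewal work into \cref{app:hitting_times} and the contrastive interpretation, whereas yours makes the source of the monotone relationship to $Q$ explicit and would survive even without the contrastive-learning framing.

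Two small cautions. You set $p_g(g)\triangleq\ppiys[\pi^g]{g}{g}$, but the paper defines $p_g(g)=\sum_{s}p_s(s)\ppiys{g}{s}$, a different object; your clean conclusion is the $\log$-affine relation $-\dss(s,a,g)=\log Q(s,a,g)-\log\ppiys{g}{g}$, and reconciling this with the lemma's displayed $-\dss=\tfrac{1}{p_g(g)}Q+c_\psi(g)$ requires reading that display loosely as ``monotone in $Q$ up to $g$-only terms,'' which is in fact all the paper's own proof uses. Also, the printed statement has $\arg\max_a\dss=\arg\max_a Q$; both you and the paper's proof correctly argue $\arg\min_a\dss=\arg\max_a Q$.
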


\paragraph{Geometric interpretation.}
Before proceeding to prove that this distance construction obeys the triangle inequality and the other quasimetric properties (\cref{sec:theory}), we provide intuition for this distance.
We visualize this distance construction in \cref{fig:overview} \textbf{(b)}.
The distribution over states visited starting at $s$ ($\ppiys{w}{s}$) is shown as the {\color{myGreen}\gscolorname\ region}; while states visited starting at $w$ ($\ppiys{w}{w}$) is shown as the {\color{myOrange}orange region}.
Our proposed distance metric is the difference in the areas of these regions (${\color{myGreen}\blacksquare} - {\color{myOrange}\blacksquare}$).
The theoretical results in the next section prove that this difference is always non-negative.
Zooming out to look at the $s$, $w$, and $g$ together, we see that these set differences obey the triangle inequality -- the area between $s$ and $g$ is smaller than the areas between $s$ and $w$ and between $w$ and $g$.
Concrete examples to build intuition for these definitions and results are presented in \cref{app:examples}.

\paragraph{Hitting times as a special case.}
To provide additional intuition into our construction, we consider a special case; the subsequent section shows that the proposed distance is a valid quasimetric in much broader settings.
In this special case, consider a controlled Markov process where the agent can remain at a state indefinitely.
This assumption means that the $\ppiys{g}{g} = 1$, so the proposed distance metric can be simplified to $d(s, g) = -\log \ppiys{g}{s}$.
This assumption also means that the hitting time of $g$ from $s$ has a deterministic value, which we will call $H(s, g)$.
Thus, we can write the discounted state occupancy measure as $\ppiys{g}{s} = \gamma^{H(s, g)}$, so the proposed distance metric is equivalent to the hitting time: $d(s, g) = H(s, g)$.
Importantly, and unlike prior work, our proposed distance continues to be a quasimetric outside of this special case, as we prove in the following section.

\subsection{Theoretical results}
\label{sec:theory}
Before proving this distance is a quasimetric over $\S$, we provide a helper lemma relating the difficulty of reaching a goal through a waypoint to the difficulty without the waypoint. The key insight we use here is that the notion of a hitting time can be generalized to represent distances in terms of discounted state occupancies.

\begin{restatable}{lemma}{waypoint}
    \label{thm:waypoint}
    For any $s,w,g\in\S$,  $\pi\in\spol$, \begin{align*}
        \max_{\pi'\in \pols} & \biggl[\frac{\ppiys[\pi']{g}{w}\ppiys[\pi]{w}{s}}{\ppiys[\pi]{w}{w}}\biggr] \\
                             & \qquad\qquad\leq \max_{\pi' \in \pols}\ppiyps{g}{s}.
    \end{align*}
\end{restatable}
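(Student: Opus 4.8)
The plan is to reinterpret the ratio $\ppiys[\pi]{w}{s}\big/\ppiys[\pi]{w}{w}$ as a discounted \emph{first-passage} probability, and then to splice two policies together at the first visit to $w$. Let $\tau_w$ be the first time the Markov chain induced by the fixed policy $\pi$ visits $w$ (with $\tau_w=0$ when $s=w$), and write $F_\pi(s,w)\triangleq\sum_{j\ge 0}\y^{j}\,\ppi(\tau_w=j\mid\so=s)$ for the generating function of its discounted first-passage law. Conditioning the event $\{\srv_k=w\}$ on the value of $\tau_w$ and applying the Markov property at that stopping time — so that on $\{\tau_w=j\}$ the chain from step $j$ onward is a fresh copy started at $w$ — gives the exact renewal identity $\ppi(\srv_k=w\mid\so=s)=\sum_{j=0}^{k}\ppi(\tau_w=j\mid\so=s)\,\ppi(\srv_{k-j}=w\mid\so=w)$; multiplying by $\y^{k}$, summing over $k$, and factoring the Cauchy product shows $\ppiys[\pi]{w}{s}=F_\pi(s,w)\,\ppiys[\pi]{w}{w}$. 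Since $\ppiys[\pi]{w}{w}\ge 1-\y>0$, this rearranges to $\ppiys[\pi]{w}{s}\big/\ppiys[\pi]{w}{w}=F_\pi(s,w)$, so the left-hand side of the lemma equals $F_\pi(s,w)\cdot\max_{\pi'\in\pols}\ppiys[\pi']{g}{w}$.

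Next, pick a stationary $\pi^\star\in\pols$ attaining $\ppiys[\pi^\star]{g}{w}=\max_{\pi'\in\pols}\ppiys[\pi']{g}{w}$ — a maximizer exists because, as noted after \eqref{eq:goal_value}, the goal reward $r_g$ admits a stationary optimal policy — and let $\bar\pi\in\nspol$ be the non-Markovian policy that follows $\pi$ until the first visit to $w$ and follows $\pi^\star$ from then on. The trajectory law under $\bar\pi$ agrees with that under $\pi$ up to time $\tau_w$, and, by the Markov property at $\tau_w$, agrees from the first visit to $w$ onward with that of $\pi^\star$ started at $w$. Decomposing the event $\{\srv_k=g\}$ under $\bar\pi$ on the value of $\tau_w$ and discarding the nonnegative contribution of trajectories that reach $g$ before ever visiting $w$ therefore lower-bounds the probability under $\bar\pi$ of $\{\srv_k=g\}$ from $s$ by $\sum_{j=0}^{k}\ppi(\tau_w=j\mid\so=s)\cdot\bigl(\text{probability under }\pi^\star\text{ of }\{\srv_{k-j}=g\}\text{ from }w\bigr)$; weighting by $(1-\y)\y^{k}$, summing over $k$, and using the same Cauchy-product factorization as in the renewal step yields $\ppiys[\bar\pi]{g}{s}\ge F_\pi(s,w)\,\ppiys[\pi^\star]{g}{w}$.

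Combining the two paragraphs, the left-hand side of the lemma equals $F_\pi(s,w)\,\ppiys[\pi^\star]{g}{w}\le\ppiys[\bar\pi]{g}{s}$, and it only remains to trade the history-dependent $\bar\pi$ for a stationary policy. Since $\ppiys[\pi']{g}{s}$ is exactly the discounted return of $\pi'$ under the reward $r_g$ (the factor $1-\y$ is folded into $r_g$), and the optimal value of a discounted MDP is attained by a stationary policy, we get $\ppiys[\bar\pi]{g}{s}\le\sup_{\pi'\in\nspol}\ppiys[\pi']{g}{s}=\max_{\pi'\in\pols}\ppiys[\pi']{g}{s}$, which is the claimed bound. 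The main obstacle is making the first-passage/splicing step fully rigorous — invoking the Markov property at the random time $\tau_w$ and checking that $\bar\pi$ genuinely reproduces the $\pi$-law before $\tau_w$ and the $\pi^\star$-law afterward — together with the concluding $\nspol\to\pols$ reduction; the degenerate cases need only a remark ($s=w$ forces $F_\pi(s,w)=1$, making the bound immediate, and $g=w$ only strengthens the renewal decomposition since $\srv_k=w$ already forces $\tau_w\le k$).
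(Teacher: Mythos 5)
Your proposal is correct and follows essentially the same route as the paper: identify $\ppiys[\pi]{w}{s}/\ppiys[\pi]{w}{w}$ as a discounted first-passage generating function via a renewal decomposition (this is exactly the paper's Lemma~\ref{thm:hitting_time}), splice a (possibly non-Markovian) policy at the first visit to $w$, lower-bound its discounted occupancy at $g$ by discarding the non-negative contribution of paths that reach $g$ before $w$, and close with the existence of a stationary optimal policy to pass from $\nspol$ back to $\pols$. The only cosmetic difference is the splice direction — you run $\pi$ before $w$ and the maximizer $\pi^\star$ afterward, which matches the lemma's quantifier structure directly, whereas the paper's $\tilde{\pi}$ runs $\pi'$ before $w$ and $\pi$ afterward and then relabels at the end.
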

The proof is in \cref{app:waypoint}. This lemma is the key to proving our main result:

\begin{tcolorbox} [colback=white!95!black]
    \begin{restatable}{theorem}{quasimetric}
        $\dss$ is a quasimetric over $\S$, satisfying the triangle inequality and other properties from \cref{def:quasimetric}.
        \label{thm:quasimetric}
    \end{restatable}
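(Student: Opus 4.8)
The plan is to verify the three quasimetric axioms of \cref{def:quasimetric} for $\dss$ over $\S$. The first move is a change of variables: writing $F(x,y) \triangleq \max_{\pi\in\spol}\bigl(\ppiys{y}{x}\,/\,\ppiys{y}{y}\bigr)$, the definition becomes simply $\dss(x,y) = -\log F(x,y)$. The maximum is attained since $\spol$ is compact, $\pi\mapsto\ppiys{y}{x}$ is continuous, and the denominator is bounded below by its time-$0$ term $1-\gamma>0$. In this language, positivity and the identity axiom reduce to the claim $F(x,y)\le 1$ with equality iff $x=y$, and the triangle inequality reduces to the claim $F(s,g)\ge F(s,w)F(w,g)$.

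For positivity and identity, I would use the Bellman-flow recursion for the discounted occupancy: for $x\neq y$ the Kronecker-delta term drops out and $\ppiys{y}{x} = \gamma\sum_{s'}P_\pi(s'\mid x)\,\ppiys{y}{s'}$, where $P_\pi(\cdot\mid x)$ is the one-step kernel under $\pi$. Fix $\pi$ and let $x^\star$ maximize $s'\mapsto\ppiys{y}{s'}$; this maximum is positive because $\ppiys{y}{y}\ge 1-\gamma$. If $x^\star\neq y$, the recursion yields $\ppiys{y}{x^\star}\le\gamma\,\ppiys{y}{x^\star}$, forcing it to be $\le 0$ — a contradiction. So the maximizer must be $y$, giving $\ppiys{y}{x}\le\ppiys{y}{y}$ for every $x$, and strictly so when $x\neq y$ (then $\ppiys{y}{x}\le\gamma\,\ppiys{y}{y}<\ppiys{y}{y}$). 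Maximizing over $\pi$ gives $F(x,y)\le 1$, with $F(x,y)<1$ when $x\neq y$ and $F(y,y)=1$; hence $\dss\ge 0$ and $\dss(x,y)=0\iff x=y$. (When $y$ is unreachable from $x$, $F(x,y)=0$ and $\dss(x,y)=+\infty$, still consistent with the axioms under the usual extended-reals conventions.)

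For the triangle inequality, \cref{thm:waypoint} does the heavy lifting. Its conclusion holds for every fixed $\pi\in\spol$, and since $\pi'$ appears only in the factor $\ppiys[\pi']{g}{w}$ on the left, maximizing the left side over $\pi$ gives $F(s,w)\,V_g^*(w)\le V_g^*(s)$; applying the same lemma with $(s,w,g)\mapsto(w,g,g)$ gives $F(w,g)\,V_g^*(g)\le V_g^*(w)$. Chaining these (everything is nonnegative) yields $V_g^*(s)\ge F(s,w)F(w,g)\,V_g^*(g)$. Separately, letting $\pi^g$ attain $V_g^*(s)=\ppiys[\pi^g]{g}{s}$ and splitting off $\ppiys[\pi^g]{g}{g}$ gives $V_g^*(s) = \bigl(\ppiys[\pi^g]{g}{s}/\ppiys[\pi^g]{g}{g}\bigr)\,\ppiys[\pi^g]{g}{g}\le F(s,g)\,V_g^*(g)$, using $\ppiys[\pi^g]{g}{g}\le\max_\pi\ppiys{g}{g}=V_g^*(g)$. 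Combining the two displays and dividing by $V_g^*(g)\ge 1-\gamma>0$ gives $F(s,g)\ge F(s,w)F(w,g)$, i.e.\ $\dss(s,g)\le\dss(s,w)+\dss(w,g)$ after taking $-\log$.

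The genuine obstacle is \cref{thm:waypoint} itself, which is established separately; granting it, the only care needed here is the bookkeeping of the $\min$/$\max$ over policies under the substitution $\dss=-\log F$, and the observation that positivity does not follow formally from the waypoint lemma but needs the one-step Bellman-flow identity for occupancy measures.
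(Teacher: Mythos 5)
Your proof is correct, and it differs from the paper's in both halves while relying on the same key lemma (\cref{thm:waypoint}) for the triangle inequality. For positivity and identity, you argue directly from the one-step Bellman-flow recursion of the occupancy measure: for $x\neq y$ the Kronecker term drops, so $\ppiys{y}{x}$ is a $\gamma$-discounted average of $\ppiys{y}{\cdot}$, and a maximizer of $\ppiys{y}{\cdot}$ other than $y$ would be self-dominated by a factor of $\gamma$, a contradiction; hence $\ppiys{y}{x}\le\gamma\ppiys{y}{y}<\ppiys{y}{y}$ when $x\neq y$. The paper instead routes these two axioms through \cref{thm:hitting_time}, writing $\ppiys{g}{s}=\mathbb{E}\!\left[\gamma^{\hit{g}}\right]\ppiys{g}{g}$ and using $\gamma^{\hit{g}}\le 1$ (with $\gamma^{\hit{g}}\le\gamma$ for $s\neq g$); both arguments work, but yours needs only the one-step Bellman identity rather than the hitting-time decomposition. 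For the triangle inequality, the paper gives a single compressed chain of (in)equalities inside a running $\min_\pi$, invoking the waypoint lemma at the key step even though that step is not a pointwise-in-$\pi$ bound; you make the policy bookkeeping explicit by instantiating \cref{thm:waypoint} twice, at $(s,w,g)$ and at $(w,g,g)$, to obtain $F(s,w)V_g^*(w)\le V_g^*(s)$ and $F(w,g)V_g^*(g)\le V_g^*(w)$ in your $F$-notation, chaining these, and closing the loop with $V_g^*(s)\le F(s,g)V_g^*(g)$ from the existence of a stationary optimal goal-reaching policy. That last observation is the bridge the paper's chain uses implicitly; writing it out, as you do, makes plain which maximizations over policies are being exchanged at each step and is arguably the cleaner exposition.
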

\end{tcolorbox}

The proof is in \cref{app:quasimetric}.
Compared with prior work~\citep{wang2023optimal}, our result extends to stochastic settings; we will empirically compare to this and other prior methods in \cref{sec:experiments}.

To make this result applicable to settings with unknown dynamics or without actions, we note the following corollaries:
\begin{restatable}{corollary}{quasimetric_corollary}
    \label{thm:quasimetric_corollary}
    $\dss$ is a quasimetric over $\S\times\A$.
\end{restatable}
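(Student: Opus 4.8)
The plan is to derive \cref{thm:quasimetric_corollary} from \cref{thm:quasimetric} by regarding $\S\times\A$ as the state space of an augmented controlled Markov process $\widetilde M$. The first step is to check that the successor state-action distribution of \eqref{eq:sa_discounted_def} is exactly the discounted state occupancy measure of $\widetilde M$: take $\widetilde M$ to have states $(s,a)\in\S\times\A$, discount $\gamma$, and transitions that from $(s,a)$ draw the next state $s'\sim\dyn(\cdot\mid s,a)$ and then set the next action according to whichever policy of $M$ is in force at $s'$. Under any stationary $\pi\in\spol$ the pair process $(\srv_t,\arv_t)$ is then a time-homogeneous Markov chain on $\S\times\A$, and expanding its discounted occupancy measure -- isolating the $k=0$ term $(1-\gamma)\delta_{s,a}(g,a')$ and re-indexing the geometric sum over the one-step transition $\dyn(s'\mid s,a)$ followed by $\pi(a'\mid s')$ -- reproduces \eqref{eq:sa_discounted_def} term for term. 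With this identification, \eqref{eq:basic_quasimetric_definition_actions} is precisely \eqref{eq:basic_quasimetric_definition} instantiated on $\widetilde M$.

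Granting that, the three quasimetric axioms transfer. For positivity and identity I would use the strong-Markov (hitting-factor) decomposition that underlies the $\S$-level argument: for any $\pi$ one has $\ppiysa{g,a'}{s,a} = \phi^{\pi}\bigl((s,a)\to(g,a')\bigr)\,\ppiysa{g,a'}{g,a'}$, where $\phi^{\pi}\in[0,1]$ is the discounted probability of ever reaching $(g,a')$ from $(s,a)$ in $\widetilde M$. Whenever $(s,a)\neq(g,a')$ the earliest possible visit to $(g,a')$ occurs at time $\geq 1$, so $\phi^{\pi}\leq\gamma<1$, hence $\log\bigl(\ppiysa{g,a'}{g,a'}/\ppiysa{g,a'}{s,a}\bigr)\geq\log(1/\gamma)>0$ for every $\pi$, and taking the minimum over $\pi\in\pols$ gives $\dss\bigl((s,a),(g,a')\bigr)>0$; when $(s,a)=(g,a')$ the ratio is $1$ so $\dss=0$. (When $s\neq g$ this, combined with \cref{thm:sa_indep}, also recovers the reduced form $\dss(s,a,g)$.) The triangle inequality is the substantive axiom: I would re-run the proof of \cref{thm:quasimetric}, invoking the analogue of \cref{thm:waypoint} with $(s,a),(w,b),(g,c)$ in place of $s,w,g$ and $\ppiysa{\cdot}{\cdot}$ in place of $\ppiys{\cdot}{\cdot}$; since the proof of \cref{thm:waypoint} uses only the Markov property under a fixed policy together with the hitting-factor decomposition -- both of which hold for $(\srv_t,\arv_t)$ -- it carries over mutatis mutandis.

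The step I expect to be the main obstacle is making $\widetilde M$ a legitimate controlled Markov process whose stationary-policy class matches the set $\pols$ over which the minimum in \eqref{eq:basic_quasimetric_definition_actions} is taken: in \eqref{eq:sa_discounted_def} the future action $a'$ at state $g$ is drawn from $\pi(\cdot\mid g)$, i.e.\ \emph{after} the next state is revealed, so a naive augmentation that commits to the next action from $(s,a)$ alone does not reproduce \eqref{eq:sa_discounted_def}. The clean fix is to let the action set of $\widetilde M$ consist of the (randomized) decision rules of $M$, so that a stationary policy of $\widetilde M$ corresponds, up to the usual mixing equivalence, to a stationary policy $\pi\in\pols$, and then to verify that $\{\ppiysa{\cdot}{\cdot}:\pi\in\pols\}$ coincides with the family of discounted occupancy measures realizable on $\widetilde M$, so the two minimizations agree and \cref{thm:quasimetric} applies directly. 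Alternatively -- and this is the route I would ultimately take, since it avoids any fuss about the action set -- one can bypass $\widetilde M$ and simply note that the proofs of \cref{thm:waypoint} and \cref{thm:quasimetric} only ever use that the state process is Markov under a fixed policy and admits the hitting-factor decomposition, both of which are immediate for $(\srv_t,\arv_t)$ on $\S\times\A$; with that observation the argument of \cref{thm:quasimetric} goes through unchanged on $\S\times\A$.
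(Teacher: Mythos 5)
Your ``ultimate'' route is exactly what the paper does: the paper's entire proof of \cref{thm:quasimetric_corollary} is the sentence that it ``follows from the same argument as in \cref{thm:quasimetric} but applying \cref{thm:sa_waypoint} instead of \cref{thm:waypoint} to the triangle inequality,'' with \cref{thm:sa_waypoint} itself proved by re-running the \cref{thm:waypoint} argument using \cref{thm:hitting_time_actions} in place of \cref{thm:hitting_time}. So your second approach is a match, and your explicit verification of positivity and identity via the hitting-factor $\phi^{\pi}\leq\gamma$ bound on $\S\times\A$ is a correct expansion of what the paper leaves implicit.

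Two small remarks. First, on the augmented process $\widetilde M$: the obstacle you flag is genuine, and the paper sidesteps it by not taking that route. Even with the ``decision rules as actions'' fix, you would still need to argue that allowing $\widetilde M$'s stationary policies to pick a \emph{different} decision rule at each augmented state $(s,a)$ does not enlarge the feasible set of occupancy measures beyond $\{\ppiysa{\cdot}{\cdot}:\pi\in\pols\}$, so the two minimizations in \cref{eq:basic_quasimetric_definition} and \cref{eq:basic_quasimetric_definition_actions} agree; this is an extra verification step the direct route avoids. Second, your characterization that the proofs of \cref{thm:waypoint,thm:quasimetric} ``only ever use that the state process is Markov under a fixed policy and admits the hitting-factor decomposition'' slightly undersells the dependencies: the proof of \cref{thm:waypoint} also constructs a non-Markovian waypoint-switching policy $\tilde\pi\in\nspol$ and then invokes the existence of a stationary Markovian optimal policy for the goal-reaching reward to conclude $\ppiys[\tilde\pi]{g}{s}\leq\max_{\pi'\in\spol}\ppiyps{g}{s}$. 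This additional ingredient does carry over to the state-action setting (the reward $r_{g,a'}(s,a)=(1-\gamma)\delta_{g,a'}(s,a)$ depends only on $(s,a)$, and \cref{thm:sa_indep,thm:hitting_decomposition} ensure the optimal choice at $g$ is to take $a'$), but it is worth stating so the ``carries over unchanged'' claim is fully justified.
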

\begin{restatable}{corollary}{quasimetric_uncontrolled_corollary}
    \label{thm:quasimetric_uncontrolled_corollary}
    $\dss$ is a quasimetric over an uncontrolled Markov process as in \cref{eq:basic_quasimetric_definition_uncontrolled}.
\end{restatable}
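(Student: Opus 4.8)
The plan is to obtain this corollary not by a fresh argument but by \emph{specializing} \cref{thm:quasimetric}. The first step is to make precise the observation already recorded in \cref{sec:distances}: an uncontrolled Markov process (a Markov chain) is exactly the special case of a controlled Markov process in which $\A=\{a\}$ is a singleton, so that $\spol=\pols=\{\pi\}$ is also a one-point set with $\pi(a\mid s)=1$ for all $s$. Under this identification the transition kernel $\sum_a\pi(a\mid s)\dyn(s'\mid s,a)$ collapses to the chain's transition kernel, the discounted state occupancy measure $\ppiys{g}{s}$ from \eqref{eq:succesor_density} becomes the policy-free occupancy $\ppiys[]{g}{s}$, and the $\min_{\pi\in\pols}$ in \eqref{eq:basic_quasimetric_definition} is a minimum over a single element. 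Hence \eqref{eq:basic_quasimetric_definition} reduces verbatim to \eqref{eq:basic_quasimetric_definition_uncontrolled}, and applying \cref{thm:quasimetric} to this process immediately yields positivity, the identity property, and the triangle inequality for $\dss$ on $\S$.

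The only thing that genuinely requires checking is that the hypotheses and intermediate constructions feeding \cref{thm:quasimetric} remain valid, and non-vacuous, in this degenerate single-policy regime. Concretely I would verify: (i) the policy class is still a nonempty compact set (a single point), so the minima and maxima in \eqref{eq:basic_quasimetric_definition} and in \cref{thm:waypoint} are attained; (ii) \cref{thm:waypoint}, whose statement quantifies over $\pi,\pi'\in\pols$, holds trivially here since both quantifiers range over $\{\pi\}$, so the bound becomes precisely $\ppiys[]{g}{w}\,\ppiys[]{w}{s}/\ppiys[]{w}{w}\le\ppiys[]{g}{s}$, the waypoint inequality for the chain; and (iii) the diagonal is well-behaved, $\dss(g,g)=\log\!\left(\ppiys[]{g}{g}/\ppiys[]{g}{g}\right)=0$, with $\ppiys[]{g}{g}\ge(1-\y)>0$ so the logarithm is defined there. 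Since the proof of \cref{thm:quasimetric} in \cref{app:quasimetric} is assembled from exactly these ingredients — the waypoint lemma together with the algebra of the log-ratio — none of its steps invoke $\abs{\A}\ge 2$, and the specialization goes through without modification.

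The closest thing to an obstacle is the handling of states $g$ unreachable from $s$, where $\ppiys[]{g}{s}=0$ and $\dss(s,g)=+\infty$. I would dispatch this exactly as the general theorem does: either adopt the extended-reals convention $\log(c/0)=+\infty$, so that $\dss$ is an extended quasimetric and the triangle inequality is automatic whenever its right-hand side is infinite, or restrict attention to a communicating class of the chain on which all occupancies are strictly positive. In neither case is a new estimate needed; the corollary is a direct instantiation of \cref{thm:quasimetric}, and I would present it as such, with the verification (i)–(iii) above as the body of the (short) proof.
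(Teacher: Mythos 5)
Your proposal is correct and matches the paper's own argument, which likewise obtains the corollary by taking $\A=\{a\}$ so that $\Pi=\{\pi\}$ and then invoking \cref{thm:quasimetric}. Your additional checks (i)--(iii) are harmless elaborations of that same one-line specialization.
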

See \cref{app:quasimetric_corollary} for discussion of these results.

\section{Using our Temporal Distance for RL}
\label{sec:contrastive}

In this section we describe an application of our proposed temporal distance to goal-conditioned reinforcement learning.
The main challenge in doing this will be (1) estimating the successor distance defined in \cref{eq:basic_quasimetric_definition}, and (2) doing so with an architecture that respects the quasimetric properties. Once learned, we will use the successor distance as a value function for training a policy.

To introduce our methods, \cref{sec:crl} will first discuss how contrastive learning \emph{almost} estimates the successor distance. We will then introduce two variants of our method, \algname~(\alg).
The first method (CMD 1-step, \cref{sec:one_step_distillation}) will acquire the successor distance by applying contrastive learning with an energy function that is the difference of two other functions.
The second method (CMD 2-step, \cref{sec:two_step_distillation})
will acquire the successor distance by taking the features from contrastive learning and distilling those features into a quasimetric architecture.
In both cases, we then use the learned successor distance to train a goal-conditioned policy.

We emphasize that the key contribution here is the mathematical construct of \emph{what} constitutes a temporal distance, not that we use a certain architecture to represent this temporal distance. Practically, we will use the Metric Residual Network (MRN) architecture~\citep{liu2023metric} in our implementation.
Pseudocode for the full algorithms (both one-step and two-step) is provided in \Cref{alg:cmd1,alg:cmd2}. We highlight the differences between the two methods in orange for clarity.

\begin{algorithm}
    \footnotesize
    \captionsetup{font=footnotesize}
    \caption{1-step Contrastive Metric Distillation (CMD-1)}
    \label{alg:cmd1}
    \begin{algorithmic}[1]
        \STATE \algorithmicinput: batch size $B$, number of iterations $T$
        \STATE {\color{ggtext} initialize potential $\psi$, quasimetric $\phi$, and policy $\mu$ parameters}
        \STATE {\color{ggtext} define $f_{ \theta}(s, a, g) \triangleq  c_{\psi}(g) -\dmd(s, a, g)$}
        \FOR{$t=1 \ldots T$}
        \STATE sample $\{ (s_i, a_i) \sim p_s\}_{i=1}^{B}$
        \STATE sample $\{(g_{i}, a_{i}') \sim \ppiys{g_{i}}{s_i, a_i}\}_{i=1}^{B}$
        \STATE {\color{ggtext}$\phi,\psi \gets (\phi,\psi) - \alpha \nabla_{\phi,\psi} \bigl[\tilde{\L}_{\phi,\psi}^{\mathsf{c}}\left( \{s_{i},a_{i}\} ,\{g_{i}\}  \right)\bigr] $ }\hfill (\ref{eq:contrastive_loss})
        \STATE $\mu \gets \mu - \alpha \nabla_{\mu} \bigl[\L^{\pi}_{\mu}(\{s_{i},a_{i}\},\{g_{i},a_{i}'\})\bigr]$ \hfill (\ref{eq:policy_extraction})
        \ENDFOR
        \STATE \algorithmicoutput\ $\pi_{\mu}$
    \end{algorithmic}
\end{algorithm}

\begin{algorithm}
    \footnotesize
    \captionsetup{font=footnotesize}
    \caption{2-step Contrastive Metric Distillation (CMD-2)}
    \label{alg:cmd2}
    \begin{algorithmic}[1]
        \STATE \algorithmicinput: batch size $B$, number of iterations $T$
        \STATE {\color{ggtext} initialize representations $\phi, \psi$, and policy parameters $\mu$}
        \STATE {\color{ggtext} initialize quasimetric $\hat{\theta}$, margin $\lambda$}
        \STATE {\color{ggtext} define $f_{ \theta}(s, a, g) \triangleq \phi(s, a)^{T}\psi(g)$}
        \FOR{$t=1 \ldots T$}
        \STATE sample $\{ (s_i, a_i) \sim p_s\}_{i=1}^{B}$
        \STATE sample $\{(g_{i}, a_{i}') \sim \ppiys{g_{i}}{s_i, a_i}\}_{i=1}^{B}$
        \STATE $\phi,\psi \gets (\phi,\psi) - \alpha \nabla_{\phi,\psi} \bigl[\L_{\phi,\psi}^{\mathsf{c}}\left( \{s_{i},a_{i}\} ,\{g_{i},a_{i}'\}  \right)\bigr] $
        \hfill (\ref{eq:contrastive_loss_reparam})
        \STATE $\mu \gets \mu - \alpha \nabla_{\mu} \bigl[\L^{\pi}_{\mu}(\{s_{i},a_{i}\},\{g_{i},a_{i}'\})\bigr]$
        \hfill (\ref{eq:policy_extraction})
        \STATE {\color{ggtext} $ \hat{\theta} \gets \theta - \alpha \nabla_{\hat{\theta}} \bigl[\L^d_{\hat{\theta},\phi,\psi}\bigl(\{s_i,a_i\},\{g_i,a_{i}'\}\bigr)\bigr]$}
        \hfill (\ref{eq:distill_loss})
        \STATE {\color{ggtext} $\lambda \gets \lambda + \alpha \paren[\big]{\C_{\hat{\theta}}(\{s_i,a_i\},\{g_i,a_{i}'\})-\varepsilon^2}$ }
        \hfill (\ref{eq:distill_loss})
        \ENDFOR
        \STATE \algorithmicoutput\ $\pi_{\mu}$
    \end{algorithmic}
\end{algorithm}

\subsection{Building block: contrastive learning}
\label{sec:crl}

Both of our proposed methods will use contrastive learning as a core primitive, so we start by discussing how we use contrastive learning to learn an energy function $f_{ \theta}(s, a, g)$, and the relationship between that energy function and the desired successor distance.

Following prior work~\citep{eysenbach2022contrastive}, we will apply contrastive learning to learn an energy function $f_{ \theta}(s, a, g)$ that assigns high scores to $(s, a, g)$ triplets from the same trajectory, and low scores to triplets where the goal $g$ is unlikely to be visited at some point after the state-action $(s, a)$ pair.
Let $p_{sa}(s, a)$ be a marginal distribution over state-action pairs, and let $p_g(g)=\sum_{s\in\S} p_s(s)\ppiys{g }{s}$ be the corresponding marginal distribution over future states.
Contrastive learning learns the energy function by sampling pairs of state-action $(s, a)$ and goals $g$ from the joint distribution $s_{i}, a_i, g_{i} \sim  \ppiys{g_i}{s_i, a_i}p_{sa}(s_i, a_i)$. We will use the symmetrized infoNCE loss function (without resubstitution)~\cite{van2019representation,sohn2016improved,eysenbach2023contrastive}, which provides the following objective:
\def\j{{\cj}}
\begin{align}
    \min_{\theta} \E_{\{s_{i},a_{i},g_{i}\}_{i=1}^B} \mathcal{L}_{\theta}^{\mathsf{c}}\bigl(\{s_{i},a_{i}\},\{g_{i}\}\bigr).
    \eqmark \label{eq:infonce_objective}
\end{align}
given the forward and backward classification losses:
\begin{align*}
    \mathcal{L}_{\theta}^{\mathsf{c}}                                              & = \mathcal{L}_{\theta}^{\operatorname{fwd}} + \mathcal{L}_{\theta}^{\operatorname{bwd}}\eqmark \label{eq:contrastive_loss}                       \\
    \mathcal{L}_{\theta}^{\operatorname{fwd}}\bigl(\{s_{i},a_{i}\},\{g_{i}\}\bigr) & = \sum_{{\ci=1}}^{B} \log \biggl(\frac{  e^{f_{\theta}(s_{\i},a_{\i},g_{\i})} }{ \sum_{{\cj=1}}^B  e^{f_{\theta}(s_{\i},a_{\i},g_{\j})}}\biggr)  \\
    \mathcal{L}_{\theta}^{\operatorname{bwd}}\bigl(\{s_{i},a_{i}\},\{g_{i}\}\bigr) & = \sum_{{\ci=1}}^{B} \log \biggl(\frac{  e^{f_{\theta}(s_{\i},a_{\i},g_{\i})} }{ \sum_{{\cj=1}}^B  e^{f_{\theta}(s_{\j},a_{\j},g_{\i})}}\biggr).
\end{align*}
We highlight the indices $\i$ and  $\j$ for clarity.
As the batch size $B$ becomes large, the optimal critic parameters $\theta^{*}$ then satisfy~\citep{ma2018noise, poole2019variational}
\begin{align}
    f_{\theta^*}(s,a,g) = \log \biggl(\frac{\ppiys{g}{s, a}}{C \cdot p_g(g)}\biggr), \label{eq:contrastive_opt}
\end{align}
where $C$ is a free parameter. Finally, note that we can represent the successor distance \eqref{eq:basic_quasimetric_definition} as the \emph{difference} of this optimal critic evaluated on two different inputs:
\begin{align}
    f_{ \theta^*}(g, a, g) - f_{ \theta^*}(s, a, g)
     & = \frac{\ppiys{g}{g, a}}{\ppiys{g}{s, a}}. \label{eq:successor_diff}
\end{align}
The next two section present practical methods for representing this difference, either via (1) a special parameterization of this critic (\cref{sec:one_step_distillation}) or (2) distillation (\cref{sec:two_step_distillation}).

\subsection{One-step distillation (CMD 1-step):}
\label{sec:one_step_distillation}

In this section, we describe how to \emph{directly} learn the successor distance using an architecture that is guaranteed to satisfy the triangle inequality and other quasimetric properties.

The key idea is to apply the contrastive learning discussed in the prior section to a particular parameterization of the energy function, so that the difference in \cref{eq:successor_diff} is represented as a single quasimetric network.
We start by noting that the function learned by contrastive learning (Eq.~\ref{eq:contrastive_opt}) can be decomposed into the successor distance plus an additional function that depends only on the future state $g$:
\begin{align*}
     & f_{ \theta^*}(s, a, g) = \log \biggl(\frac{\ppiys{g}{s, a}}{C \cdot p_g(g)}\biggr) \eqmark \label{eq:successor_decomposition}                                                 \\
     & = \underbrace{\log \biggl(\frac{\ppiys{g}{s, a}}{\ppiys{g}{g}}\biggr)}_{-\dmd(s, a, g)} - \underbrace{\log \biggl( \frac{\ppiys{g}{g}}{C \cdot p_g(g)} \biggr)}_{-c_\psi(g)}.
\end{align*}
Thus, we will apply the contrastive objective from \cref{eq:contrastive_loss} to an energy function $f_{ \theta=(\phi, \psi)}(s, a, g)$ parameterized as the difference of a quasimetric network $d_\phi(s, a, g)$ and another learned function $c_{\psi}: \R\to \R$:
\begin{equation}
    f_{\phi,\psi}\bigl(s,a,g\bigr) = c_{\psi}(g) -\dmd(s, a, g).
    \label{eq:critic_reparam}
\end{equation}

We denote the quasimetric network as $d_\phi(s, a, g)$, with no action $a'$ at $g$ in light of \cref{thm:sa_indep}.
Since quasimetric architectures like MRN~\citep{liu2023metric} are defined using a pair of elements from the same space, we will actually parameterize the distance $d_\phi\bigl((s, a), (g, a')\bigr)$, and then enforce invariance to $a'$ by randomly permuting the $a'$ actions at the goal in the contrastive loss \cref{eq:contrastive_loss}. We can express this by restating \cref{eq:contrastive_loss} with these ``extra'' actions:
\begin{align}
    \tilde{\gL}_{\phi,\psi}^{\mathsf{c}}
    &= \tilde{\gL}_{\phi,\psi}^{\operatorname{fwd}} +
        \tilde{\mathcal{L}}_{\phi,\psi}^{\operatorname{bwd}}\eqmark
        \label{eq:contrastive_loss_reparam}\\
    \tilde{\gL}_{\phi,\psi}^{\operatorname{fwd}}\bigl(\{s_{i},a_{i}\},\{g_{i}\}\bigr)
    &= \sum_{{\ci=1}}^{B} \log \biggl(\frac{ e^{f_{\phi,\psi}(s_{\i},a_{\i},g_{\i},a'_{\j})} }{
        \sum_{{\cj=1}}^B
        e^{f_{\phi,\psi}(s_{\i},a_{\i},g_{\j},a'_{\i})}}\biggr) \nonumber\\
    \tilde{\gL}_{\phi,\psi}^{\operatorname{bwd}}\bigl(\{s_{i},a_{i}\},\{g_{i}\}\bigr)
    &= \sum_{{\ci=1}}^{B} \log \biggl(\frac{ e^{f_{\phi,\psi}(s_{\i},a_{\i},g_{\i},a'_{\j})} }{
        \sum_{{\cj=1}}^B
        e^{f_{\phi,\psi}(s_{\j},a_{\j},g_{\i},a'_{\j})}}\biggr). \nonumber \\[2pt]
      &\mspace{-130mu}\text{where }\text{ }
        f_{\phi,\psi}(s, a, g, a') \triangleq c_{\psi}(g) - d_{\phi}\bigl((s, a), (g, a')\bigr). \nonumber
\end{align}

The term $c_\psi(g)$ is important for allowing $f_{\phi,\psi}(s, a, g,a')$ to represent positive numbers, as $-\dmd\bigl((s, a), (g,a')\bigr)$ is non-positive because it is a quasimetric network.
Since the $a'$ actions in \cref{eq:contrastive_loss_reparam} are shuffled, the optimal $\theta^{*}=(\phi^{*},\psi^{*})$ allows us to state the optimal critic as $f_{\theta^{*}}(s, a, g) = c_{\psi^{*}}(g) - \dmd(s, a, g)$.
With this parameterization, we can use \cref{eq:successor_diff} to obtain the successor distance as
\begin{align*}
     & f_{ \theta^*}(g, a, g) - f_{ \theta^*}(s, a, g)                                                                               \\
     & = \cancelto{0}{-\dmd(g, a, g)} + \cancel{c_\psi(g)} + \dmd(s, a, g) - \cancel{c_\psi(g)}. \eqmark \label{eq:successor_diff_2}
\end{align*}
After contrastive learning, we will discard $c_\psi(g)$ and use $\dmd(s, a, g)$ as our successor distance. We conclude by providing the formal statement that this approach recovers the successor distance:
\begin{restatable}{lemma}{cmdonestep}
    \label{thm:cmdonestep}
    For $s\neq g$, the unique solution to the loss function in \cref{eq:contrastive_loss} with the parameterization in \cref{eq:critic_reparam} is
    \begin{align*}
        d_{\phi^{*}}(s, a, g) & = \log \frac{\ppiys{g}{s, \arv_0\smeq a}}{\ppiys{g}{g}} \eqmark \label{eq:onestep_unique}
        .\end{align*}
\end{restatable}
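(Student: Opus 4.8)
The plan is to minimize the symmetrized infoNCE objective \eqref{eq:contrastive_loss} over the restricted critic family \eqref{eq:critic_reparam} and read off $d_{\phi^*}$. The starting point is \eqref{eq:contrastive_opt}: in the large-batch limit every unconstrained minimizer has the form $f_{\theta^*}(s,a,g)=\log(\ppiys{g}{s,a}/(C\,p_g(g)))$, and (by the symmetrization of the forward and backward terms) $C>0$ is the only remaining degree of freedom, as \eqref{eq:contrastive_opt} asserts. The first step is to show this value is attained within \eqref{eq:critic_reparam} on the off-diagonal set $\{s\neq g\}$, using the decomposition of \eqref{eq:successor_decomposition}: $c_\psi(g)=\log(\ppiys{g}{g}/(C\,p_g(g)))$ together with $d_\phi(s,a,g)=\log(\ppiys{g}{g}/\ppiys{g}{s,\arv_0\smeq a})$. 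Here I would check that this $d_\phi$ is (the $s\neq g$ restriction of) an honest quasimetric on $\S\times\A$ — it is the successor distance of the Markov chain induced by the sampling policy $\pi$, a quasimetric by \cref{thm:quasimetric_corollary,thm:quasimetric_uncontrolled_corollary}, and it is independent of the goal action by \cref{thm:sa_indep} so $d_\phi(s,a,g)$ is unambiguous notation — so, assuming the quasimetric network is an expressive-enough quasimetric approximator, this decomposition lies in the parametrized class and is therefore optimal there.

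For uniqueness, suppose $f_{\phi^*,\psi^*}(s,a,g)=c_{\psi^*}(g)-d_{\phi^*}(s,a,g)$ is optimal. The forward and backward softmax optimality conditions, applied to off-diagonal triples, force $c_{\psi^*}(g)-d_{\phi^*}(s,a,g)=\log(\ppiys{g}{s,a}/(C'p_g(g)))$ for $s\neq g$: conditioning on $(s,a)$, the forward cross-entropy is minimized in $g\mapsto f(s,a,g)$ at $e^{f}\propto\ppiys{g}{s,a}/p_g(g)$ in $g$; conditioning on $g$, the backward term forces $e^{f}\propto\ppiys{g}{s,a}/p_g(g)$ among $s\neq g$; together these pin the off-diagonal critic up to one global additive constant $-\log C'$. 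It remains to identify $c_{\psi^*}(g)$. In \eqref{eq:critic_reparam} the goal slot carries action $a$, so on the diagonal $d_{\phi^*}(g,a,g)=d_{\phi^*}((g,a),(g,a))=0$ by the identity axiom (\cref{def:quasimetric}); hence all copies $\{(g,a,g)\}_a$ are forced to the single critic value $c_{\psi^*}(g)$, and the softmax optimality condition on these tied copies makes $c_{\psi^*}(g)$ the log of a sampling-weighted mean of the $a$-dependent targets $\ppiys{g}{g,\arv_0\smeq a}/(C'p_g(g))$. Since $\ppiys{g}{g}=\sum_{a}\pi(a\mid g)\,\ppiys{g}{g,\arv_0\smeq a}$, that mean is $\ppiys{g}{g}/(C'p_g(g))$, so $c_{\psi^*}(g)=\log(\ppiys{g}{g}/(C'p_g(g)))$; substituting back, $C'$ cancels and $d_{\phi^*}(s,a,g)=\log(\ppiys{g}{g}/\ppiys{g}{s,\arv_0\smeq a})$ for $s\neq g$, as in \eqref{eq:successor_decomposition} and \eqref{eq:onestep_unique}. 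The only leftover freedom is the benign $C'$ in $c_{\psi^*}$, which is discarded, so $d_{\phi^*}$ is unique.

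The crux — and the reason one cannot simply quote \eqref{eq:contrastive_opt} — is that, unlike the unconstrained problem, the family \eqref{eq:critic_reparam} \emph{cannot} fit the optimal critic on the diagonal $\{s=g\}$: there the unconstrained optimum $\log(\ppiys{g}{g,\arv_0\smeq a}/(C\,p_g(g)))$ varies with $a$, while the parametrization forces the single value $c_\psi(g)$. One must therefore argue (i) that this unavoidable diagonal mismatch does not change the \emph{shape} in $(s,a)$ of the loss-optimal off-diagonal critic, only a $g$-dependent additive constant — then fixed by combining the forward condition (constant in $g$) with the backward condition (constant in $(s,a)$) — and (ii) that the loss-optimal diagonal constant is exactly $\log(\ppiys{g}{g}/(C'p_g(g)))$, which is where the identity $\ppiys{g}{g}=\sum_a\pi(a\mid g)\,\ppiys{g}{g,\arv_0\smeq a}$ enters. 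A secondary point is the expressiveness assumption on the quasimetric network together with the verification via \cref{thm:quasimetric_corollary} that the target is a genuine quasimetric on $\S\times\A$, so that the infimum over the parametrized class is attained; the remaining manipulations are routine.
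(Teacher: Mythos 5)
Your diagnosis of the difficulty is sharper than the paper's own proof, which is a three-sentence sketch: the paper asserts that Eq.~\eqref{eq:successor_decomposition} shows the parametrization in Eq.~\eqref{eq:critic_reparam} ``is sufficient to represent'' the unconstrained optimum Eq.~\eqref{eq:contrastive_opt}, and then cites universality of the quasimetric network and of $c_\psi$. You correctly observe that this assertion is false on the diagonal $\{s=g\}$: the decomposition assigns $-d_{\mathrm{MD}}(g,a,g) = \log\bigl(\ppiys{g}{g,a}/\ppiys{g}{g}\bigr)$, which varies with $a$ and is generically nonzero, whereas any quasimetric network is architecturally forced to output $d_\phi(g,a,g)=0$, making $f_{\phi,\psi}(g,a,g)=c_\psi(g)$ constant in $a$. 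Consequently Eq.~\eqref{eq:successor_diff_2}'s $\cancelto{0}{-\dmd(g,a,g)}$ is inconsistent with the decomposition that ostensibly justifies it, and no universal quasimetric architecture can close that gap. The paper's proof simply does not address this; your proposal is the first place it is named.

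Where your repair falls short is step (i). You claim the off-diagonal critic is pinned down because ``the forward condition forces $e^{f}\propto \ppiys{g}{s,a}/p_g(g)$ in $g$'' and ``the backward condition forces $e^{f}\propto \ppiys{g}{s,a}/p_g(g)$ among $s\neq g$.'' These two statements are the first-order conditions of the forward loss alone and the backward loss alone, but the object being minimized is their \emph{sum}, subject to an equality constraint active on the diagonal. At the constrained optimum the Lagrangian stationarity condition for an off-diagonal $f(s,a,g)$ is
\begin{equation*}
e^{f(s,a,g)} \;=\; \frac{2\,\ppiys{g}{s,a}}{p_g(g)}\cdot\Bigl(\tfrac{1}{Z^{\mathrm{fwd}}(s,a)}+\tfrac{1}{Z^{\mathrm{bwd}}(g)}\Bigr)^{-1},\quad Z^{\mathrm{fwd}}(s,a)=\textstyle\sum_{g'}p_g(g')e^{f(s,a,g')},\ Z^{\mathrm{bwd}}(g)=\textstyle\sum_{s',a'}p_{sa}(s',a')e^{f(s',a',g)},
\end{equation*}
and both normalizers contain diagonal terms that are perturbed by the constraint $f(g,a,g)=c_\psi(g)$ — perturbed, moreover, by an amount that depends on $(s,a)$ through $\ppiys{s}{s,a}$. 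The forward and backward conditions therefore do not hold individually, $Z^{\mathrm{fwd}}(s,a)$ and $Z^{\mathrm{bwd}}(g)$ need not be constants, and the $(s,a)$-dependence does not automatically cancel out to leave the density ratio up to a global constant. Step (ii) has the analogous issue: the stationarity condition for the tied diagonal parameter $c_\psi(g)$ is a weighted equation in the exponentiated variables, not a statement that $e^{c_\psi(g)}$ equals the $\pi(a\mid g)$-weighted arithmetic mean of the unconstrained targets $\ppiys{g}{g,a}/(C p_g(g))$; the identity $\ppiys{g}{g}=\sum_a \pi(a\mid g)\ppiys{g}{g,a}$ is the right ingredient, but it does not by itself show that the loss optimum selects that particular average. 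To make the lemma rigorous one would have to solve the coupled fixed-point system in $\{f(s,a,g)\}_{s\neq g}$, $\{c_\psi(g)\}$, $\{Z^{\mathrm{fwd}}\}$, $\{Z^{\mathrm{bwd}}\}$, and verify that the candidate solution (off-diagonal density ratio with the averaged diagonal value) simultaneously satisfies all of them. Your proposal names the right candidate and the right key identity, but stops short of that verification; as written, the inference from ``forward and backward conditions'' is not valid. (You also silently flip the sign in Eq.~\eqref{eq:onestep_unique} to $\log(\ppiys{g}{g}/\ppiys{g}{s,\arv_0\smeq a})$, which is the nonnegative, quasimetric-consistent version — a correction of an apparent typo in the paper's statement, worth flagging explicitly.)
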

See \cref{app:analysis_cmdonestep} for the proof.

One appealing aspect of this approach is that it only involves one learning step. The next section provides an alternative approach that proceeds in two steps.

\subsection{Two-step distillation (CMD 2-step)}
\label{sec:two_step_distillation}

In this section we present an alternative approach to estimating the successor distance with a quasimetric network. While the first approach (CMD 1-step) is appealing because of its simplicity, this approach may be appealing in settings where pre-trained contrastive features are already available, but users want to boost performance by capitalizing on the inductive biases of quasimetric networks.

The key idea behind our approach is that that optimal critic from contrastive learning (Eq.~\ref{eq:contrastive_opt}) can be used to estimate the successor distance by performing a change of variables:
\begin{align*}
    f_{\theta} & (g,a, g) - f_{\theta}(s, a, g)                                                    \\*
               & = \log \tfrac{\ppiys{g}{g, a}}{Cp_g(g)} - \log \tfrac{\ppiys g{s,a}}{Cp_g(g)}     \\
               & = \log \tfrac{\ppiys{g}{g, a}}{\ppiys g{s,a}}. \eqmark \label{eq:dsd_upper_bound}
\end{align*}
This final expression is the successor distance; \cref{app:action_invariance} will discuss why the action $a$ in the numerator can be ignored.
Because the successor distance obeys the triangle inequality (and the other quasimetric properties), we will distill this difference into a quasimetric network. We will call this method CMD 2-Step.

\subsubsection{Distilling to a quasimetric architecture}

The representations in \cref{eq:dsd_upper_bound} already form a quasimetric on $S\times A$, and could directly be used for action selection. However, because we know that these representations satisfy the triangle inequality, distilling them into a network that is architecturally-constrained to obey the triangle inequality serves as a very strong prior: a way of potentially combating overfitting and improving generalization. To do this, we distill the bound into a distance $\dmd$ parameterized by an MRN quasimetric
\cite{liu2023metric}.

CMD 2-Step works by applying contrastive learning (Eq.~\ref{eq:contrastive_loss}). Following prior work~\cite{eysenbach2022contrastive}, we will parameterize the energy function as the inner product between learned representations: $f_{\phi,\psi}(s,a, g) =\phi(s, a)^{T}\psi(g)$.
The critic parameters are thus $\theta = (\phi, \psi)$.
We then distill the quasimetric architecture using \cref{eq:dsd_upper_bound} as a constraint. We enforce the constraint with a Lagrange multiplier $\lambda$ to ensure that the margin $\C_{\hat{\theta}}(\{s_i,a_i\},\{g_i,a_{i}'\}) $ for \cref{eq:dsd_upper_bound} satisfies
$\C_{\hat{\theta}}(\{s_i,a_i\},\{g_i,a_{i}'\})\leq \varepsilon^2$ on pairs of states and future goals sampled from the data:
\begin{align}
     & \C_{\hat{\theta}}(\{s_i,a_i\},\{g_i,a_{i}'\}) \\
     & \kern 0.1cm \triangleq \sum_{\i,\j=1}^B  \max\bigl(0,d_{\hat{\theta}}\bigl((s_{\i}, a_{\i}), (g_{\i}, a_{\j}')\bigr) - f_{\phi,\psi}(s_{\i},a_{\i}, g_{\i})\bigr)^2 \nonumber\\
     & \text{where } f_{\phi,\psi}(s,a,g)  \triangleq  \paren[\big]{\phi(g, a)-\phi(s, a)}^T\psi(g). \eqmark
    \label{eq:distillation_constraint}
\end{align}

When distilling a distance $\dss$, subject to the constraint above, we want to be maximally conservative in determining which goals we can reach. We assume \cref{eq:dsd_upper_bound} as a prior, and use dual descent to perform a constrained minimization of the objective
\begin{align*}
     & \L^d_{\hat{\theta},\phi,\psi}\bigl(\{s_{i},a_{i}\},\{g_{i}, a_{i}'\}\bigr)  \triangleq                                               \\
     & \kern 1em   \sum_{\i,\j=1}^{B}  \max\paren[\big]{0,f_{\phi,\psi}(s_{\i},a_{\i}, g_{\j})-d_{\hat{\theta}}(s_{\i}, a_{\i}, g_{\j})}^2,
    \eqmark \label{eq:distill_loss}
\end{align*}
yielding an overall optimization \begin{align}
     & \min_{\hat{\theta}} \max_{\lambda\geq 0}  \kern -1.5em {\sum_{\kern 1.5em\{s_{i},a_{i},g_{i},a_{i}'\}_{i=1}^B}}  \Bigl[
    \L^d_{\hat{\theta},\phi,\psi}\bigl(\{s_i,a_i\},\{g_i,a_{i}'\}\bigr) \nonumber                                              \\
     & \kern 2cm+ \lambda\paren[\big]{\C_{\hat{\theta}}\bigl(\{s_i,a_i\},\{g_i,a_{i}'\}\bigr)-\varepsilon^2} \Bigr].
    \label{eq:distill_optimization}
\end{align}

\subsection{Parameterizing the quasimetric}
\label{sec:parameterizing_quasimetric}
For both methods in \cref{sec:one_step_distillation,sec:two_step_distillation}, we learn a distance $d_{\theta}: (\S\times \A)^2\to \R$ parameterized with the Metric Residual Network (MRN) architecture \cite{liu2023metric}. We apply the square root correction noted by \citet[Appendix~C.2]{wang2022improved} to ensure that the distance satisfies the triangle inequality. This parameterization can be expressed using learned representations $h_{\theta},g_{\theta} : \S\times \A \to \R^{d}$:
\begin{gather*}
    d_{\theta}(x, y) = \Delta({h_{\theta}(x) - h_{\theta}(y)}) + \|g_{\theta}(x) - g_{\theta}(y)\| \\
    \text{where} \quad  \Delta(x)  = {\max}_{i=1}^d [{\max}(0,x_{i})].
    \eqmark \label{eq:mrn}
\end{gather*}

\subsection{Policy extraction}
\label{sec:policy_extraction}

Once we extract distance $\dss$, we learn a goal-conditioned policy $\pi_\mu$ to select actions that minimize the distance successor between states and random goals~\citep{schaul2015universal}:
\begin{gather*}
    \min_{\mu} \E_{p_s(s)\ p_g(g,a') \pi_{\mu}(\hat{a} \mid s, g)} \bigl[\mathcal{L}^{\pi}_{\mu}\bigl(\{s_{i},\hat{a}_{i}\},\{g_{i},a_{i}'\}\bigr)\bigr]
    \eqmark \label{eq:policy_extraction}
\end{gather*}

To prevent the policy from sampling out-of-distribution actions for offline RL~\citep{fujimoto2021minimalist, kumar2020conservative, kumar2019stabilizing}, we adopt another goal-conditioned behavioral cloning regularization from~\citet{zheng2023contrastive} or use advantage weighted regression~\citep{nair2021awac}.

With the behavior cloning regularization, the policy extraction loss becomes:
\begin{align*}
    \mathcal{L}^{\pi}_{\mu}
    &\bigl(\{s_{i},a_{i}\},\{g_{i},a_{i}'\}\bigr) =\sum_{\i,\j=1}^{B} \E_{\hat{a}\sim \pi_{\mu}(\hat{a}
        \mid s_{\i},g_{\j})} \\
    &\kern 2em \bigl[ \dmd\bigl((s_{\i}, \hat{a}), (g_{\j}, a_{\j}' )\bigr) + \log \pi_{\mu}(a_{\i}
        \mid s_{\i}, g_{\i})\bigr] .\eqmark \label{eq:policy_loss}
\end{align*}

\begin{figure*}[htb]
    \centering
    \hfill
    \begin{subfigure}[m]{.3\linewidth}
        \centering
        \includegraphics[width=\linewidth]{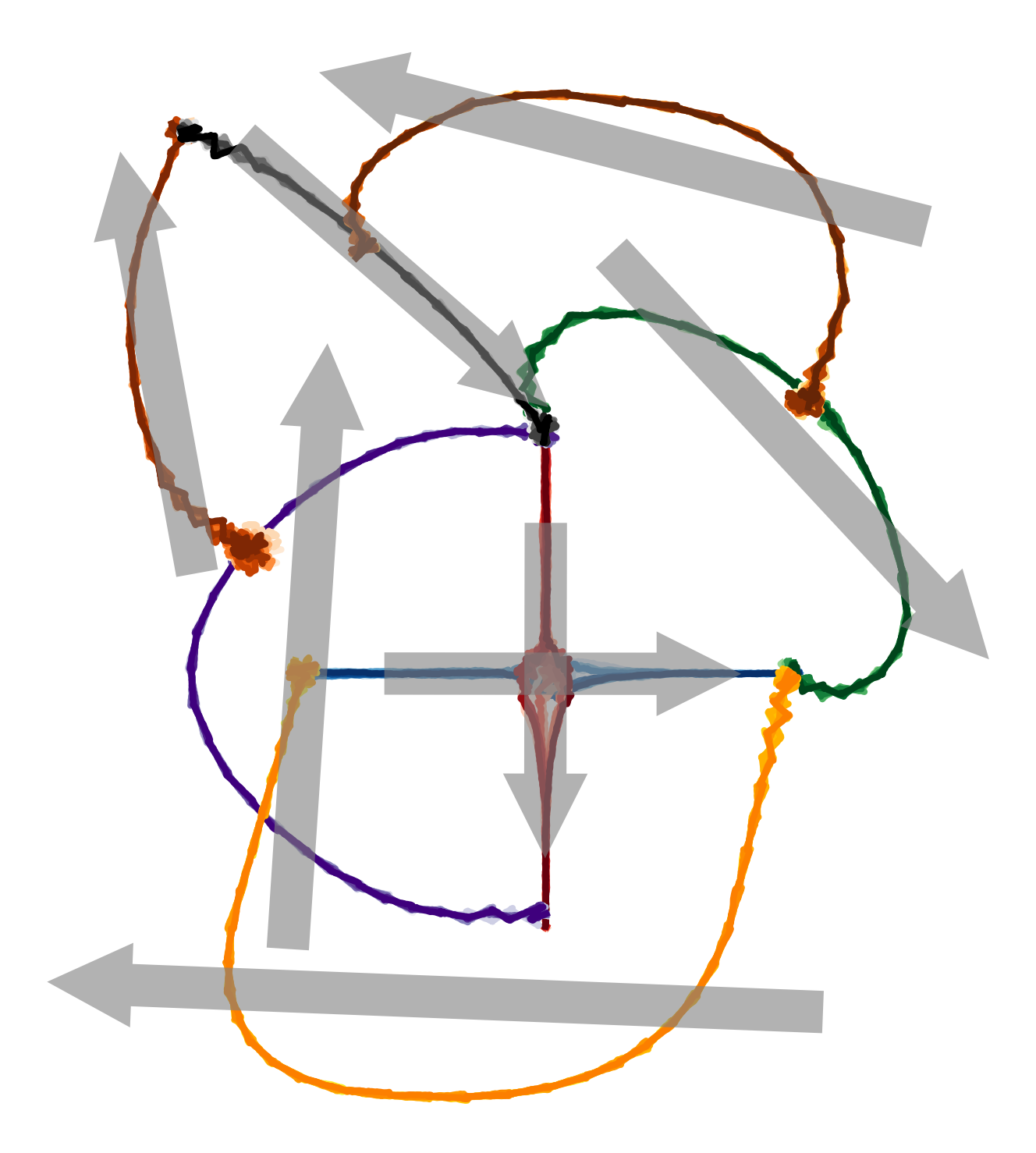}
            \end{subfigure}\hfill
    \begin{subfigure}[m]{.7\linewidth}
        \centering
        \includegraphics[width=.8\linewidth]{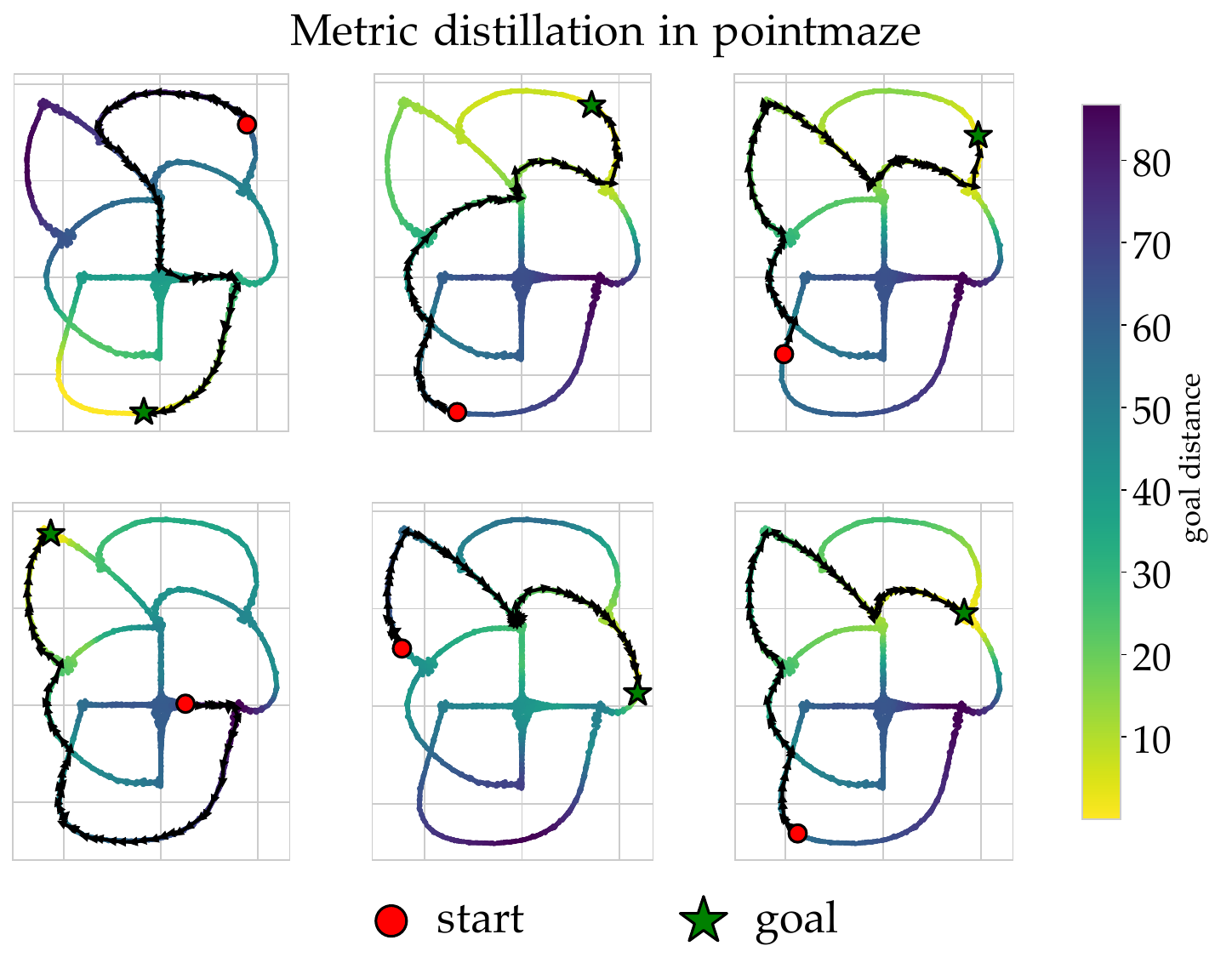}
            \end{subfigure}
    \caption{\emph{(Left)} We collect four types of trajectories on this 2D navigation task. The large gray arrows depict the direction of motion.
        Note that navigating between certain states requires piecing together trajectories of different colors.
        \emph{(Right)} Our proposed temporal distance correctly pieces together trajectories, allowing an RL agent to successfully navigate between
        pairs of states that never occur on the same trajectory. This combinatorial generalization~\citep{ghugare2024closing} or
        ``stitching''~\citep{fu2020d4rl} property is typically associated with bootstrapping with temporal difference learning, which our temporal
        distances do not require.
        \label{fig:pointmaze}}
\end{figure*}

\begin{table*}[htb]
    \centering
    \caption{\textbf{Offline RL benchmarks}: We use the AntMaze suite~\citep{fu2020d4rl} of goal-conditioned RL tasks to compare our method to prior methods, measuring the success rate and standard error across multiple seeds.
        \label{tab:offline}}
    \smallskip
    \begin{tabular}{c|cccccc}
        \toprule
                       & \textbf{CMD 1-step (Ours)} & \textbf{CMD 2-step (Ours)} & QRL                     & CRL (CPC)      & GCBC            & IQL\footnotemark \\
        \midrule
        umaze          & $90.3 \pm 4.2$             & $\mathbf{97.0 \pm 0.4}$    & $76.8 \pm 2.3$          & $79.8 \pm 1.6$ & $65.4 \pm 87.5$ & $87.5$           \\
        umaze-diverse  & $\mathbf{90.3 \pm 4.6}$    & $\mathbf{90.5 \pm 1.4}$    & $80.1 \pm 1.3$          & $77.6 \pm 2.8$ & $60.9 \pm 62.2$ & $62.2$           \\
        medium-play    & $\mathbf{78.0 \pm 4.0}$    & $72.3 \pm 2.6$             & $\mathbf{76.5 \pm 2.1}$ & $72.6 \pm 2.9$ & $58.1 \pm 71.2$ & $71.2$           \\
        medium-diverse & $\mathbf{83.0 \pm 3.1}$    & $71.8 \pm 1.0$             & $73.4 \pm 1.9$          & $71.5 \pm 1.3$ & $67.3 \pm 70.0$ & $70.0$           \\
        large-play     & $\mathbf{68.0 \pm 2.1}$    & $59.2 \pm 1.8$             & $52.9 \pm 2.8$          & $48.6 \pm 4.4$ & $32.4 \pm 39.6$ & $39.6$           \\
        large-diverse  & $\mathbf{74.5 \pm 2.3}$    & $63.6 \pm 1.9$             & $51.5 \pm 3.8$          & $54.1 \pm 5.5$ & $36.9 \pm 47.5$ & $47.5$           \\
        \bottomrule
    \end{tabular}

            \end{table*}

\section{Experiments}
\label{sec:experiments}

Our  experiments study a synthetic 2D navigation task to see whether our proposed temporal distance can learn meaningful distances of pairs
of states unseen together during training (i.e., \emph{combinatorial generalization}). We also study the efficacy of extracting policies from
this learned distance function, both in this 2D navigation setting and in a 29-dim robotic locomotion problem from the AntMaze benchmark suite. As
discussed below, for the latter experiment our comparison will be restricted to small neural network sizes. Code for our experiments is linked in \cref{app:code} and additional implementation details are provided in \cref{app:implementation}.

\subsection{Controlled experiments on synthetic data}

We first present results in a simple 2D navigation environment to illustrate how our approach can recombine pieces of data to navigate between pairs
of states unseen together during training (i.e., combinatorial generalization).

\begin{figure}
    \centering
    \includestandalone[width=.9\linewidth]{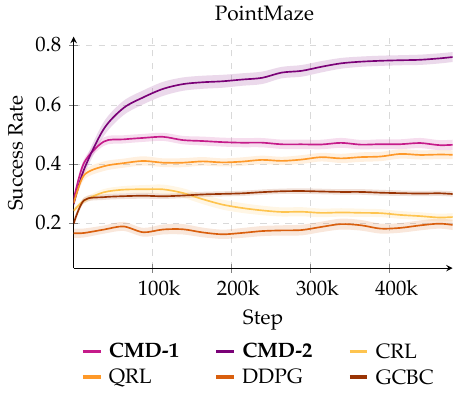}
    \caption{Metric distillation enables more efficient offline training and long-horizon compositional generalization. Results are plotted with one standard error.}
    \label{fig:training_curves}
    \vspace*{-0.5cm}

\end{figure}

We start by collecting four types of trajectories, identified in Fig.~\ref{fig:pointmaze} (left). We will be primarily interested in what distances
our method assigns to pairs of states that occur on different types of trajectories. Our hypothesis is that, by virtue of the triangle inequality, our
method will correctly reason about \emph{global distances}, despite only being trained on \emph{locally} on individual trajectories. Note that the
collected data is directed, so we will also be test whether our learned distance obeys the arrow of time.

\paragraph{Visualizing the paths.}
Using these data, we learn the contrastive representations and distill them into a quasimetric architecture, as described in \cref{sec:contrastive}.
In the subfigures in Fig.~\ref{fig:pointmaze} (right), we visualize these distances using the colormap, with the goal set to the state identified
with the ${\color{green}\bigstar}$.
This figure also visualizes paths created using the learned distances. Starting at the state identified as
${\color{red}\bullet}$, greedily select a next state within an L2 ball that has minimal temporal distance to the goal.
We repeat this process until arriving at the goal.
These planned paths demonstrate that the learned temporal distances perform combinatorial generalization; each of the subfigures
in Fig.~\ref{fig:pointmaze} show examples of inferred paths that require correctly assigning distances to pairs of states that were unseen together
during training. Note, too, that these paths follow the arrow of time: the small arrows depicting the paths go in the same direction that the data was
collected (large gray arrows in the left subplot).

\paragraph{Control performance.}

We next study whether these learned distances can be used for control, using the same synthetic dataset as above. We will compare with four baselines.
\textbf{DDPG} learns distances using Q-learning with a reward that is $-1$ at every transition until the goal is
reached~\citep{lillicrap2016continuous,lin2019reinforcement,kaelblingLearningAchieveGoals}; at least in deterministic settings, these distances should correspond to hitting
times. Quasimetric RL~\citep{wang2023optimal} is an extension of this baseline that uses a quasimetric architecture to
represent these distances.
Contrastive RL~\citep{eysenbach2022contrastive} estimates distances directly using the contrastive features (the same as used for our method), but without the metric distillation step. For all these methods as well as our method, a policy is learned using advantage-weighted maximum likelihood~\citep{neumann2008fitted, peters2007reinforcement}. We also compare with a behavioral cloning baseline, which predicts the action that was most likely to occur in the dataset conditioned on state and goal.

We measure performance by evaluating the success rate of each these approaches at reaching randomly sampled goals. In Fig.~\ref{fig:training_curves},
we plot this success rate over the course of training. Note that this experiment is done in the offline setting, so the $X$ axis corresponds to the
number of gradient steps. We observe that our temporal distance can successfully navigate to approximately 80\% of goals, while the best prior method
has a success rate of around 50\%.
Because our method starts with the same contrastive features as the contrastive RL baseline, the better performance of ours highlights the importance
of the quasimetric architecture (i.e., of imposing the triangle inequality as an inductive bias).
While both our method and quasimetric RL use a quasimetric architecture to represent a distance, we aim to represent the proposed distance
metric from \cref{sec:contrastive} while quasimetric RL aims to represent a hitting time; the better performance of our method highlights the
need to use a temporal distance that is well defined in stochastic settings such as this.

\subsection{Scaling to higher-dimensional tasks}
To study whether our temporal distance learning approach is applicable to higher-dimensional tasks, we apply it to a 111-dimensional robotic control task (AntMaze~\citep{fu2020d4rl}). In this problem setting we additionally condition the temporal distance on the action and use the learned distance as a value function for selecting actions.

We compare our approach to three competitive baselines. \textbf{GCBC} is a conditional imitation learning method that learns a goal-conditioned
policy directly, without a value function or distance function~\citep{ding2023generalizing,
    lynchLanguageConditionedImitation2021,chen2021decision, ghosh2021learning}. Both our method and Contrastive RL
(\textbf{CRL})~\citep{eysenbach2022contrastive} learn representations in the same way (Sec.~\ref{sec:contrastive}); the
difference is that our method additionally distills these representations into a quasimetric architecture. Thus, comparing our method to CRL tests
the importance of the triangle inequality as an inductive bias. We consider two variants of CRL using either rank-based
NCE~\citep{van2019representation, zheng2023contrastive} or binary-NCE~\citep{gutmann2010noise}, namely
CRL (CPC) and CRL (NCE).
Finally, Quasimetric RL (\textbf{QRL})~\citep{wang2022learning} represents a different type of temporal distance with the same quasimetric architecture as our method; it is unclear whether the temporal distance from QRL obeys the triangle inequality in stochastic settings. Thus, comparing our method to QRL tests the importance of using a temporal distance that is well defined in stochastic settings.
Prior work~\citep{zheng2023contrastive} has shown that these baselines are more competitive than other recent alternatives, including IQL~\citep{kostrikovOfflineReinforcementLearning2021} with HER~\citep{andrychowicz2017neural} and decision transformer~\citep{chen2021decision}.

\section{Conclusion}

The main contribution of this paper is a mathematical definition of temporal distance: one that obeys the triangle inequality, is meaningful in stochastic settings, and can be effectively estimated using modern deep learning techniques. Our results build upon prior work on quasimetric networks by showing how those architectures networks can be used to estimate temporal distances, including in stochastic settings. Our empirical results show that our learned distances stitch together data, allowing RL agents to navigate between states even when there does not exist a complete path between them in the training data.
Taken together, these results suggest that some elements of dynamic programming methods might be realized by simple supervised learning methods combined with appropriate architectures.

\paragraph{Limitations.} While we show that the method works effectively even on continuous settings, our theoretical results require that the MDP have discrete states. Our proposed distance may also be infinite in non-ergodic settings.
\footnotetext{IQL results are taken from \citet{kostrikovOfflineReinforcementLearning2021} which does not report standard errors.}

\FloatBarrier

\section*{Impact Statement}

This paper presents work whose goal is to advance the field of Machine Learning. There are many potential societal consequences of our work, none which we feel must be specifically highlighted here.

\section*{Acknowledgements}
We would like to thank Seohong Park, Oleg Rybkin, Micha\l{} Zawalski, Pranav Atreya, Eli Bronstein, and anonymous reviewers for discussions and feedback. We would like to acknowledge funding provided by ONR N00014-21-1-2838, AFOSR FA9550-22-1-0273, as well as NSF 2310757.
This work was also supported by Princeton Research Computing, a consortium of groups led by the Princeton Institute for Computational Science and Engineering (PICSciE) and Office of Information Technology's Research Computing.

\bibliography{references}
\bibliographystyle{icml2024}

\appendix
\onecolumn

\section{Code}
\label{app:code}

An implementation of the evaluated methods is available at \url{https://github.com/vivekmyers/contrastive_metrics}.

\section{Hitting Times}
\label{app:hitting_times}

In this section, we show several lemmas relating the discounted state occupancy measure (defined in
\cref{eq:succesor_density,eq:sa_discounted_def})
to the hitting times of states and goals. We start by defining a notion of hitting time:
\begin{definition}
    \label{def:hitting_time}
    For $\pi\in\spol$ and $s,g \in \S$, define the random variable $\hit{g}$ by \begin{align}
        \hit{g} & = \min\{t\geq 0: E_t\} \label{eq:hitting_def}                                                 \\
                & \text{ where } E_t \text{ is the event that } \srv_t = g \text{ given } \srv_0 = s. \nonumber
    \end{align}
    In other words, $\hit{g}$ is the smallest $t$ such that $\srv_t=g$ starting in $\srv_0=s$, i.e., the hitting time of $g$.
\end{definition}

Now, we can relate the discounted state occupancy measure to the hitting time of a goal.

\begin{restatable}{lemma}{hitting}
    \label{thm:hitting_time}
    For $\hit{g}$ defined as \eqref{eq:hitting_def}, \[
        \ppiys{g}{s}=\ex[][\big]{\gamma^{\hit{g}}}\ppiys{g}{g}.
    \]
\end{restatable}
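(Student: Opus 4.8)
The plan is to expand the discounted state occupancy measure via its defining geometric series and then decompose each $k$-step transition probability according to the \emph{first} time the chain reaches $g$. Since $\pi\in\spol$ is stationary, $(\srv_t)_{t\geq 0}$ is a time-homogeneous Markov chain and $\hit{g}$ is a stopping time valued in $\{0,1,2,\dots\}\cup\{\infty\}$. For a fixed $k$, whenever $\srv_k=g$ we must have $\hit{g}=t$ for some $t\leq k$; partitioning on this value and applying the Markov property at the (fixed) time $t$ — the event $\{\hit{g}=t\}$ is $\sigma(\srv_0,\dots,\srv_t)$-measurable and forces $\srv_t=g$ — gives, using time-homogeneity,
\begin{align*}
	\ppi(\srv_k\smeq g\smid \srv_0\smeq s)=\sum_{t=0}^{k}\ppi(\hit{g}\smeq t\smid \srv_0\smeq s)\,\ppi(\srv_{k-t}\smeq g\smid \srv_0\smeq g).
\end{align*}

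I would then substitute this identity into $\ppiys{g}{s}=(1-\gamma)\sum_{k=0}^\infty \gamma^k\,\ppi(\srv_k\smeq g\smid\srv_0\smeq s)$, interchange the two non-negative sums (Tonelli), and re-index with $j=k-t$:
\begin{align*}
	\ppiys{g}{s}
	&=(1-\gamma)\sum_{t=0}^\infty\sum_{k=t}^\infty \gamma^k\,\ppi(\hit{g}\smeq t\smid \srv_0\smeq s)\,\ppi(\srv_{k-t}\smeq g\smid\srv_0\smeq g)\\
	&=\Bigl(\sum_{t=0}^\infty \gamma^t\,\ppi(\hit{g}\smeq t\smid \srv_0\smeq s)\Bigr)\,(1-\gamma)\sum_{j=0}^\infty \gamma^j\,\ppi(\srv_j\smeq g\smid\srv_0\smeq g)\\
	&=\ex[][\big]{\gamma^{\hit{g}}}\,\ppiys{g}{g},
\end{align*}
where the last line uses the convention $\gamma^\infty=0$: trajectories that never reach $g$ (i.e., $\hit{g}=\infty$) contribute nothing to $\ex[][\big]{\gamma^{\hit{g}}}$, consistent with the fact that they contribute nothing to the occupancy measure either. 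The degenerate case $s=g$ is immediate, since then $\hit{g}=0$ and both sides equal $\ppiys{g}{g}$.

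The step I expect to be the main obstacle is making the first-hitting-time decomposition rigorous: one must argue that, conditioned on $\{\hit{g}=t,\ \srv_0=s\}$, the future $\srv_{t+1},\srv_{t+2},\dots$ has the law of the chain started afresh at $g$. In continuous time this would invoke the strong Markov property, but since time here is discrete it suffices to apply the ordinary Markov property at each fixed $t\leq k$ and then sum over the finitely many such $t$; the only bookkeeping is that when $\ppi(\hit{g}\smeq t\smid\srv_0\smeq s)=0$ the corresponding conditional probability is vacuous and the term vanishes anyway. Every interchange of sums is legitimate because the summands are non-negative and $\gamma\in(0,1)$ makes all the series converge.
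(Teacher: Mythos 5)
Your proposal is correct and follows essentially the same route as the paper: both decompose the discounted occupancy sum by conditioning on the first hitting time of $g$, invoke the Markov property and time-homogeneity to restart the chain at $g$, and then re-index and factor the double sum to produce $\ex[][\big]{\gamma^{\hit{g}}}$ and $\ppiys{g}{g}$. Your write-up is slightly more explicit about the measurability of $\{\hit{g}=t\}$, the Tonelli interchange, and the $\gamma^{\infty}=0$ convention, but the underlying argument is the same.
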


\begin{proof}
    Let ${\color{myOrange} \ppi(\st = g \mid \srv_0 = s, H_s^\pi(g) = h)}$ be the probability of reaching goal $g$ at time step $t$ when starting at
    state $s$ given hitting time $H^{\pi}_s(g) = h$. By the definition of $H^{\pi}_s(g)$, we have
    \begin{align}
        {\color{myOrange} \ppi(\st = g \mid \srv_0 = s, H_s^\pi(g) = h)} & = \begin{cases}
                                                                               0                             & t < h    \\
                                                                               \ppi(\st = g \mid \srv_h = g) & t \geq h
                                                                           \end{cases}.
        \label{eq:cond_p_future_t}
    \end{align}
    Thus,
    \begin{align*}
        \ppiys{g}{s}
         & = (1-\y) \sum_{t=0}^\infty \y^t \ppi(\st = g \mid \srv_0 = s)                                                                      \\
         & = (1-\y) \sum_{t=0}^\infty \sum_{h = 0}^{\infty} \y^t \ppi(\st = g, H_s^\pi(g) = h \mid \srv_0 = s)                                \\
         & = \sum_{h = 0}^{\infty} p(H_s^\pi(g) = h) \Bigl( (1-\y) \sum_{t=0}^\infty \y^t {\color{myOrange} \ppi(\st = g \mid \srv_0 = s,
        H_s^\pi(g) = h)} \Bigr)                                                                                                               \\
         & = \sum_{h = 0}^{\infty} p(H_s^\pi(g) = h) \Bigl( (1-\y) \sum_{t = h}^\infty \y^t \ppi(\st = g \mid \srv_h = g) \Bigr) \tag{Plug in
        Eq.~\eqref{eq:cond_p_future_t}}                                                                                                       \\
         & = \sum_{h = 0}^{\infty} \y^h p(H_s^\pi(g) = h) \Bigl( (1-\y) \sum_{t = h}^\infty \y^{t - h} \ppi(\stmh = g \mid \srv_0 = g) \Bigr)
        \tag{Stationary property of MDP}                                                                                                      \\
         & = \sum_{h = 0}^{\infty} \y^h p(H_s^\pi(g) = h) \Bigl( (1-\y) \sum_{t = 0}^\infty \y^{t} \ppi(\st = g \mid \srv_0 = g) \Bigr)
        \tag{Change of variables}                                                                                                             \\
         & = \ex[][\big]{\y^{\hit{g}}}\ppiys{g}{g},
    \end{align*} as desired.
\end{proof}

We can generalize this result to account for actions as well.
\begin{definition}
    \label{def:hitting_time_actions}
    For $\pi\in\spol$, $s,g \in \S$, and $a,a'\in\A$, we define the following additional hitting time random variables \begin{align}
        H_{s,a}^{\pi}{(g,a')} & = \min\{t\geq 0: E_t\} \label{eq:hitting_def_saga}                                                      \\
                              & \text{ where } E_t \text{ is the event that } \srv_t = g, \arv_t=a' \text{ given } \srv_0 = s, \arv_0=a
        \nonumber                                                                                                                       \\
        H_{s,a}^{\pi}{(g)}    & = \min\{t\geq 0: E_t\} \label{eq:hitting_def_sag}                                                       \\
                              & \text{ where } E_t \text{ is the event that } \srv_t = g \text{ given } \srv_0 = s, \arv_0=a.
        \nonumber
    \end{align}
\end{definition}

We now show an analogous result for the discounted state-action occupancy measure.
\begin{lemma}
    \label{thm:hitting_time_actions}
    For $H_{s,a}^{\pi}{(g,a)}$ defined as \eqref{eq:hitting_def_saga} and $s\neq g$, \[
        \ppiysa{g,a'}{s,a}=\ex[][\big]{\gamma^{H_{s,a}^{\pi}{(g,a')}}} \ppiysa{g,a'}{g,a'}.
    \]
\end{lemma}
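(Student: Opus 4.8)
The plan is to mirror the proof of \cref{thm:hitting_time}, replacing the single state $\srv_t$ throughout by the augmented pair $(\srv_t,\arv_t)$, which is itself a time-homogeneous Markov chain under the stationary policy $\pi$. First I would record that, because $s\neq g$, the Kronecker term in \eqref{eq:sa_discounted_def} vanishes and that definition collapses to the clean form
\begin{align*}
	\ppiysa{g,a'}{s,a} = (1-\gamma)\sum_{t=0}^{\infty}\gamma^{t}\,\ppi\bigl(\srv_t = g,\,\arv_t = a' \mid \srv_0 = s,\,\arv_0 = a\bigr),
\end{align*}
and that the same closed form (now including a $t=0$ term equal to $1$) holds for $\ppiysa{g,a'}{g,a'}$ directly from \eqref{eq:sa_discounted_def}. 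This reduces the claim to an identity purely about the augmented chain, in exact parallel with the $\srv$-only case.

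Next I would introduce $H \triangleq H_{s,a}^{\pi}(g,a')$ from \eqref{eq:hitting_def_saga} and condition on its value, writing $\ppi(\srv_t = g, \arv_t = a' \mid \srv_0 = s, \arv_0 = a) = \sum_{h=0}^{\infty} p(H = h)\,\ppi(\srv_t = g, \arv_t = a' \mid \srv_0 = s, \arv_0 = a, H = h)$. The crux -- the analogue of \eqref{eq:cond_p_future_t} -- is that, conditioned on $\{\srv_0 = s, \arv_0 = a, H = h\}$, this probability is $0$ for $t < h$ and equals $\ppi(\srv_{t-h} = g, \arv_{t-h} = a' \mid \srv_0 = g, \arv_0 = a')$ for $t \geq h$. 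To justify it, note that $\{H = h\}$ is measurable with respect to $(\srv_0,\arv_0),\dots,(\srv_h,\arv_h)$ and forces $(\srv_h,\arv_h) = (g,a')$; applying the ordinary Markov property of the augmented chain at the fixed time $h$ then restarts it from $(g,a')$, and time-homogeneity lets me reindex by $t-h$.

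Substituting this back, interchanging the (nonnegative) sums over $t$ and $h$ by Tonelli, and changing variables $t' = t - h$ yields
\begin{align*}
	\ppiysa{g,a'}{s,a}
	 & = \sum_{h=0}^{\infty}\gamma^{h}\,p(H = h)\Bigl[(1-\gamma)\sum_{t'=0}^{\infty}\gamma^{t'}\,\ppi\bigl(\srv_{t'} = g,\,\arv_{t'} = a' \mid \srv_0 = g,\,\arv_0 = a'\bigr)\Bigr] \\
	 & = \ex[][\big]{\gamma^{H_{s,a}^{\pi}(g,a')}}\,\ppiysa{g,a'}{g,a'},
\end{align*}
where the bracketed factor is recognized as $\ppiysa{g,a'}{g,a'}$ via the closed form from the first step, and $\sum_h \gamma^h p(H=h) = \ex[][\big]{\gamma^{H_{s,a}^{\pi}(g,a')}}$.

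I expect the main obstacle to be making the restart-at-the-hitting-time step fully rigorous: arguing that, for the post-$h$ trajectory, conditioning on $\{H = h\}$ together with the initial state-action pair is equivalent to conditioning only on $(\srv_h,\arv_h) = (g,a')$. This is the content implicitly encoded in \eqref{eq:cond_p_future_t} of the $\srv$-only proof, lifted to the augmented chain (and this is where the hypothesis $s\neq g$ is needed, so that $H\geq 1$ and the conditioning is nonvacuous); the remaining summation manipulations and change of variables parallel \cref{thm:hitting_time} essentially line for line.
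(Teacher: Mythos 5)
Your proposal is correct and mirrors the paper's own proof: both condition on the hitting time $H_{s,a}^{\pi}(g,a')$, apply the (strong) Markov property of the augmented chain $(\srv_t,\arv_t)$ to restart the process from $(g,a')$ at time $h$, reindex by $t-h$, and recognize the remaining series as $\ppiysa{g,a'}{g,a'}$. Your phrasing of the restart step in terms of the augmented pair as a single time-homogeneous chain is a bit cleaner than the paper's explicit piecewise case analysis in \eqref{eq:cond_p_future_t_actions}, but the underlying argument is identical.
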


\begin{proof}
    \def\saexpr{{\color{myOrange} \ppi(\st = g, \at = a' \mid \srv_0 = s, \arv_0 = a, H_{s,a}^{\pi}{(g,a')} = h)}}Let $\saexpr$ be the probability of reaching
    goal $g$ at time step $t$ then taking action $a'$, when starting at state $s$ given the hitting time $H_{s,a}^{\pi}{(g)} = h$ and $\pi$ takes
    action $a'$ at time $h$.
    By the definition of $H_{s,a}^{\pi}{(g)}$, we have
    \begin{align}
        {\saexpr}
         & = \begin{cases}
                 0                                                      & t < h  \\
                 1                                                      & t = h  \\
                 \pi(a' \mid g)\ppi(\st = g \mid \srv_h = g, \arv_h=a') & t > h.
             \end{cases}
        \label{eq:cond_p_future_t_actions}
    \end{align}
    Thus,
    \begin{align*}
         & \hspace*{-1em}\ppiysa{g,a'}{s,a}
        \\
         & = (1-\y) \sum_{t=0}^\infty \y^t \ppi(\st = g,\at=a' \mid \srv_0 = s, \arv_0 = a)
        \\
         & = (1-\y) \sum_{t=0}^\infty \sum_{h = 0}^{\infty} \y^t \ppi(\st = g, \at=a', H_{s,a}^{\pi}{(g,a')} = h \mid \srv_0 = s, \arv_0 = a)
        \\
         & = \sum_{h = 0}^{\infty} p\bigl(H_{s,a}^{\pi}{(g,a')} = h\bigr) \Bigl( (1-\y) \sum_{t=0}^\infty \y^t {\saexpr} \Bigr)                                                                                                             \\
         & = \sum_{h =
            0}^{\infty} p\bigl(H_{s,a}^{\pi}{(g,a')} = h\bigr) (1-\y)  \Bigl( 1 + \sum_{t = h+1}^\infty \y^t \pi(a' \mid g)\ppi(\st = g \mid \srv_h = g,
            \arv_h=a' )
        \Bigr) \tag{Plug in Eq.~\eqref{eq:cond_p_future_t_actions}}                                                                                                                                                                         \\
         & = \sum_{h = 0}^{\infty} \gamma^{h} p\bigl(H_{s,a}^{\pi}{(g,a')} = h\bigr) (1-\y)  \Bigl( 1 + \sum_{t = h+1}^\infty \y^t \pi(a' \mid
        g)\ppi(\srv_{t-h} = g \mid \srv_h = g, \arv_h=a' ) \Bigr) \tag{Stationary property of MDP}                                                                                                                                          \\
         & = \sum_{h = 0}^{\infty} \gamma^{h} p\bigl(H_{s,a}^{\pi}{(g,a')} = h\bigr) (1-\y)  \Bigl( 1 + \sum_{t = 1}^\infty \y^t \pi(a' \mid
        g)\ppi(\srv_{t} = g \mid \srv_0 = g, \arv_0=a' ) \Bigr) \tag{Change of variables}                                                                                                                                                   \\
         & = \sum_{h = 0}^{\infty} \gamma^{h} p\bigl(H_{s,a}^{\pi}{(g,a')} = h\bigr) (1-\y)  \Bigl( 1 + \sum_{t = 1}^\infty \y^t \pi(a' \mid
        g)\ppi(\srv_{t} = g \mid \srv_1 = s' )\dyn(s' \mid s,a) \Bigr)                                                                                                                                                                      \\
         & = \sum_{h = 0}^{\infty} \gamma^{h} p\bigl(H_{s,a}^{\pi}{(g,a')} = h\bigr) (1-\y)  \Bigl( 1 +  \gamma\sum_{t = 0}^\infty \y^t \ppi(\srv_{t} = g \mid \srv_0 = s' )\pi(a' \mid g)\dyn(s' \mid s,a) \Bigr)\tag{Change of variables} \\
         & = \sum_{h = 0}^{\infty} \gamma^{h} p\bigl(H_{s,a}^{\pi}{(g,a')} = h\bigr) (1-\y)  \Bigl( \delta_{g,a'}(g,a') +  \gamma\sum_{t = 0}^\infty \y^t \ppi(\srv_{t} = g \mid \srv_0 = s' )\pi(a' \mid g)\dyn(s' \mid s,a) \Bigr)        \\
         & = \sum_{h = 0}^{\infty} \gamma^{h} p\bigl(H_{s,a}^{\pi}{(g,a')} = h\bigr) \ppiysa{g,a'}{g,a'}                                                                                                                                    \\
         & = \ex[][\big]{\gamma^{H_{s,a}^{\pi}{(g,a')}}} \ppiysa{g,a'}{g,a'},
    \end{align*}

\end{proof}

\begin{remark}
    \label{thm:goal_action_hitting}
    The hitting times $H_s^{\pi}(g)$ and $H_{s,a}^{\pi}(g)$ are independent of the distribution $\pi(\cdot  \mid  g)$.
\end{remark}

\begin{remark}
    \label{thm:hitting_decomposition}
    We can write \[
        H_{s,a}^{\pi}(g,a') = H_{s,a}^{\pi}(g) + \ex[\pi(\hat{a}\mid g)][\big]{H_{g,\hat{a}}^{\pi}(g,a')}.
    \]
\end{remark}

These remarks follow from the definitions in \cref{eq:hitting_def_saga,eq:hitting_def_sag} and the conditional independence of the states before $g$ is reached and the action taken at $g$.

\section{Proofs of \cref{thm:sa_indep,thm:state_min_optimal}}
 \label{app:dist_remarks}

\label{app:sa_indep}
\csname sa_indep\endcsname*

\begin{proof}
    Suppose $s\neq g$.
    We have from \cref{eq:basic_quasimetric_definition_actions} that
    \begin{align*}
        \dss & \bigl( (s,a), (g,a') \bigr)                                                                                                \\
             & = \min_{\pi\in\pols} \biggl[\log \frac{\ppiysa{g,a'}{g,a'}}{\ppiysa{g,a'}{s,a}}\biggr]                                     \\
             & = -\max_{\pi\in\pols} \Bigl[ \log \ex[][\big]{\gamma^{H_{s,a}^{\pi}{(g,a')}}} \Bigr] \tag{\cref{thm:hitting_time_actions}} \\
             & = -\max_{\pi\in\pols} \log \ex[][\big] {\gamma^{H_{s,a}^{\pi}(g)
                + \ex[\pi(\hat{a}\mid g)][]{H_{g,\hat{a}}^{\pi}(g,a')} }} \tag{\cref{thm:hitting_decomposition}}.
    \end{align*}
                                            Now, from \cref{thm:goal_action_hitting}, the first term $H_{s,a}^{\pi}(g)$ is independent of $\pi(\cdot  \mid g)$.
    Meanwhile, the second term $\ex[\pi(\hat{a}\mid g)][]{H_{g,\hat{a}}^{\pi}(g,a')}$ is minimized when $\pi( \hat{a}  \mid g) = \delta_{a'}(\hat{a})$, i.e.,
    when the action taken at $g$ is $a'$.
    Thus, at the maximum $\pi(\cdot  \mid g) = \delta_{a'}(\cdot)$; continuing, we see
    \begin{align*}
        \dss & \bigl( (s,a), (g,a') \bigr)                                            \\
             & = -\max_{\pi\in\pols} \log \ex[][\big] {\gamma^{H_{s,a}^{\pi}(g)
        + \ex[\pi(\hat{a}\mid g)][]{H_{g,\hat{a}}^{\pi}(g,a')} }}                     \\
             & = -\max_{\pi\in\pols} \log \ex[][\big] {\gamma^{H_{s,a}^{\pi}(g)  }} .
    \end{align*}

    From this last expression we see that $\dss\bigl((s,a), (g,a')\bigr)$ is independent of the action at the goal $a'$, as desired.
\end{proof}

\csname state_min_optimal\endcsname*
\label{app:state_min_optimal}

    \begin{proof}
        As noted in prior work~\citet[Lemma~4.1]{eysenbach2022contrastive}, the optimal critic (Eq.~\ref{eq:contrastive_opt}) is equivalent to a scaled Q function:
        \begin{align*}
            e^{f_{ \theta^*}(s, a, g)} = \frac{1}{C \cdot p_g(g)} \underbrace{\ppiys{g}{s, a}}_{Q(s, a, g)}.
        \end{align*}
        \cref{eq:successor_decomposition} then tells us that the successor distance differs from $f_{ \theta^*}(s, a, g)$ by a term that depends only on $g$, so taking the argmin of the successor distance is the same as taking the argmax of this scaled Q function.
    \end{proof}

\section{Proof of \cref{thm:waypoint}}
\label{app:waypoint}

Now, we will prove \cref{thm:waypoint}.
\waypoint*

\begin{proof}

    Define     $\tpi\in\nspol$ to be the non-Markovian policy that starts executing $\pi'$ and switches to $\pi$ after reaching $w$:
    \begin{align*}
        \tpi(a_t\mid s_t) & = \left\{\begin{array}{cl}
                                         \pi(a_t\mid s_t)  & w\in\curly*{s_0, s_1, \ldots, s_t} \\
                                         \pi'(a_t\mid s_t) & \text{otherwise.}
                                     \end{array}\right.
    \end{align*}
    We take $\pi' \in  \Pi$ to be an arbitrary policy.
    Let $E_1$ be the event where the hitting time of waypoint $w$ is less than the hitting time of goal $g$ starting from state $s$, i.e.,
    $E_1 = \{H_{s}^{\tpi}(w) < H_{s}^{\tpi}(g) \}$.
    Complementary, let $E_2$ be the event where the hitting time of waypoint $w$ is greater than or equal to the
    hitting time of goal $g$ starting from state $s$, i.e., $E_2 = \{H_{s}^{\tpi}(w) \ge H_{s}^{\tpi}(g) \}$. We note that $E_1$ and $E_2$ are mutually exclusive.
                        
                    \def\blueexpr{{\color{Cyan} p^{\tpi}(\st = g \mid \srv_0 = s,
                H_{s}^{\tpi}{(w)} = h)}}
    We start by rewriting $\ppiys[\tpi]{g}{s}$: \begin{align}
        \ppiys[\tpi]{g}{s} & = \sum_{h = 0}^\infty p(H_{s}^{\tpi}(w) = h) \left( (1 - \y) \sum_{t = 0}^\infty \y^t {\blueexpr} \right) \nonumber \\
                           & = \sum_{h = 0}^\infty p(H_{s}^{\pi'}(w) = h) \left( (1 - \y) \sum_{t = 0}^\infty \y^t {\blueexpr} \right).
        \label{eq:p_future_hit_expansion}
    \end{align}
                                                
    \def\pinkexpr{{\color{CarnationPink} p^{\pi}(\srv_t = g \mid \srv_h = w)}}
    \def\thit#1{H_{s}^{\tpi}{(#1)}}
    \def\phit#1{H_{s}^{\pi'}{(#1)}}
    Now, $\blueexpr$ can be written as
    \begin{align}
        \blueexpr = \begin{cases}
                        0           & t < h, \text{under } E_1 \\
                        {p^{\tpi}(\srv_t = g \mid \srv_0 = s, \thit{w} = h, \thit{g}
                        \leq h)}    & t < h, \text{under } E_2 \\
                        {\pinkexpr} & t \geq h
                    \end{cases}.
    \end{align}
            Dropping the first $h$ terms (which are all non-negative), we get
    \begin{align*}
        \sum_{t = 0}^\infty \y^t {\blueexpr} & \geq \sum_{t = h}^\infty \y^t {\blueexpr} = \sum_{t = h}^{\infty} \y^t {\pinkexpr}
    \end{align*}
    Plugging this inequality into \cref{eq:p_future_hit_expansion}, we have
    \begin{align*}
        \ppiys[\tpi]{g}{s} & \geq \sum_{h = 0}^\infty p(\phit{w} = h) \left( (1 - \y) \sum_{t = h}^\infty \y^t {\pinkexpr} \right)                                                                  \\
                           & = \sum_{h = 0}^\infty \gamma^h p(\phit{w} = h) \left( (1 - \y) \sum_{t = h}^\infty \y^{t - h} \ppi(\stmh = g \mid \srv_0 = w) \right) \tag{Stationary property of MDP} \\
                           & = \sum_{h = 0}^\infty \gamma^h p(\phit{w} = h) \left( (1 - \y) \sum_{t = 0}^\infty \y^t \ppi(\st = g \mid \srv_0 = w) \right) \tag{Change of variables}                \\
                           & = \ex[][\big]{\y^{\phit{w}}}\ppiys{g}{w}.
    \end{align*}
    Applying \Cref{thm:hitting_time} to the last step, we see
    \begin{align*}
        \ppiys[\tpi]{g}{s} & \geq \ex[][\big]{\y^{\hit{w}}}\ppiyps{g}{w}        \\
                           & = \frac{\ppiyps{g}{w} \ppiys{w}{s}}{\ppiys{w}{w}}.
    \end{align*}
    Since there is a stationary Markovian optimal policy $\pi^{*}$ for $r_g$ in $M$, we know from \cref{thm:state_min_optimal} that \[
        \ppiys[\tpi]{g}{s}\leq \max_{\pi'\in\spol} \ppiyps{g}{s},
    \] so we have \[
        \max_{\pi'\in\spol} \ppiyps{g}{s} \geq \frac{\ppiyps{g}{w} \ppiys{w}{s}}{\ppiys{w}{w}}.
    \]
    Since $\pi'$ on the RHS was arbitrary, we conclude
    \[
        \max_{\pi'\in\spol}\Bigl[\frac{\ppiyps{g}{w} \ppiys{w}{s}}{\ppiys{w}{w}}\Bigr] \leq \max_{\pi'\in\spol} \ppiyps{g}{s} .
    \]

\end{proof}

\begin{lemma}
    \label{thm:sa_waypoint}
    For any $s,w,g\in\S$, $a_s, a_w, a_g \in \A$, and $\pi \in \spol$, we have
    \begin{align*}
        \max_{\pi'\in \pols} & \biggl[\frac{\ppiysa[\pi']{g,a_g}{w,a_w}\ppiysa[\pi']{w,a_w}{s,a_s}}{\ppiysa[\pi']{w,a_w}{w,a_w}}\biggr] \\
                             & \qquad\qquad\qquad\qquad\qquad \leq \max_{\pi' \in \pols}\bigl[\ppiysa[\pi']{g,a_g}{s,a_s}\bigr].
    \end{align*}
\end{lemma}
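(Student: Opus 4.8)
The plan is to mirror—and streamline—the proof of \cref{thm:waypoint}. The essential simplification is that every occupancy measure in the statement of \cref{thm:sa_waypoint} is taken under the \emph{same} policy $\pi'$, so no non-Markovian ``switching'' policy is needed: it suffices to establish, for each fixed $\pi'\in\pols$, the pointwise bound
\[
	\frac{\ppiysa[\pi']{g,a_g}{w,a_w}\,\ppiysa[\pi']{w,a_w}{s,a_s}}{\ppiysa[\pi']{w,a_w}{w,a_w}}\;\le\;\ppiysa[\pi']{g,a_g}{s,a_s},
\]
and then take the maximum over $\pi'$ on both sides, using that the right-hand side is at most $\max_{\pi''\in\pols}\ppiysa[\pi'']{g,a_g}{s,a_s}$ for every $\pi'$. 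Note $\ppiysa[\pi']{w,a_w}{w,a_w}\ge 1-\gamma>0$ by \cref{eq:sa_discounted_def}, so the ratio is well defined, and the case $(s,a_s)=(w,a_w)$ is immediate since the left-hand side then collapses to $\ppiysa[\pi']{g,a_g}{s,a_s}$.

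First I would rewrite the ratio using the state--action hitting-time identity. Whenever $(s,a_s)\neq(w,a_w)$ the hitting time $H^{\pi'}_{s,a_s}(w,a_w)$ is almost surely positive, so \cref{thm:hitting_time_actions} (whose proof only uses positivity of the hitting time, hence also covers the $s=w,\ a_s\neq a_w$ case) gives $\ppiysa[\pi']{w,a_w}{s,a_s}=\ex[][\big]{\gamma^{H^{\pi'}_{s,a_s}(w,a_w)}}\,\ppiysa[\pi']{w,a_w}{w,a_w}$; thus the left-hand side of the desired bound equals $\ex[][\big]{\gamma^{H^{\pi'}_{s,a_s}(w,a_w)}}\,\ppiysa[\pi']{g,a_g}{w,a_w}$.

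Next I would lower-bound $\ppiysa[\pi']{g,a_g}{s,a_s}$. Writing it as $(1-\gamma)\sum_{t\ge 0}\gamma^t\,p^{\pi'}(\srv_t=g,\arv_t=a_g\mid \srv_0=s,\arv_0=a_s)$ (as in the opening steps of the proof of \cref{thm:hitting_time_actions}) and conditioning on the value $h$ of $H:=H^{\pi'}_{s,a_s}(w,a_w)$, the strong Markov property—used exactly as in the proofs of \cref{thm:hitting_time,thm:hitting_time_actions,thm:waypoint}—shows that for $t\ge h$ the conditional probability $p^{\pi'}(\srv_t=g,\arv_t=a_g\mid \srv_0=s,\arv_0=a_s,\,H=h)$ equals $p^{\pi'}(\srv_{t-h}=g,\arv_{t-h}=a_g\mid \srv_0=w,\arv_0=a_w)$, while for $t<h$ it is nonnegative. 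Dropping the $t<h$ terms and reindexing $t\mapsto t-h$ (so the inner sum becomes $\ppiysa[\pi']{g,a_g}{w,a_w}$) yields
\[
	\ppiysa[\pi']{g,a_g}{s,a_s}\;\ge\;\sum_{h\ge 0}p(H=h)\,\gamma^{h}\,\ppiysa[\pi']{g,a_g}{w,a_w}\;=\;\ex[][\big]{\gamma^{H}}\,\ppiysa[\pi']{g,a_g}{w,a_w}.
\]
Combining this with the previous paragraph gives the pointwise bound, and taking $\max_{\pi'}$ finishes the proof.

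The main obstacle is the bookkeeping in the strong Markov step: one must check that conditioning on $\{H=h\}$ does not alter the law of the trajectory after time $h$ given that it is at the state--action pair $(w,a_w)$ at time $h$. This is routine—$\{H=h\}$ is determined by the history through time $h$, $\pi'$ is stationary Markov, and crucially we condition on $\arv_h=a_w$ as well as $\srv_h=w$, so the post-$h$ dynamics are genuinely those of a fresh trajectory started at $(w,a_w)$—and it repeats a manipulation already carried out in the earlier hitting-time lemmas, so there is no genuine new difficulty. Equivalently, for fixed $\pi'$ one could apply the argument in the proof of \cref{thm:waypoint}, specialized to a single policy, to the Markov chain on $\S\times\A$ with transition kernel $(s,a)\mapsto(s',a')\sim\dyn(s'\mid s,a)\,\pi'(a'\mid s')$, whose discounted state-occupancy measure reproduces $\ppiysa[\pi']{g,a'}{s,a}$.
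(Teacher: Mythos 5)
Your proof is correct, and it takes a genuinely cleaner route than the one the paper gestures at. The paper's one-line proof instructs the reader to ``follow the same argument as \cref{thm:waypoint} but applying \cref{thm:hitting_time_actions},'' which would mean introducing the non-Markovian switching policy $\tilde{\pi}$ and then invoking the existence of a stationary optimal policy to bound $\ppiysa[\tilde{\pi}]{g,a_g}{s,a_s}$ by a max. You correctly notice that, as stated, all three occupancy measures in the left-hand ratio of \cref{thm:sa_waypoint} are under the \emph{same} policy $\pi'$ (unlike \cref{thm:waypoint}, where the $s \to w$ factors use $\pi$ and the $w \to g$ factor uses $\pi'$), so the switching device is unnecessary; for each fixed $\pi'$ you can run the hitting-time conditioning argument entirely under $\pi'$, viewing the state--action process as an ordinary Markov chain on $\S\times\A$, and obtain the pointwise chain inequality $\frac{\ppiysa[\pi']{g,a_g}{w,a_w}\,\ppiysa[\pi']{w,a_w}{s,a_s}}{\ppiysa[\pi']{w,a_w}{w,a_w}}\le\ppiysa[\pi']{g,a_g}{s,a_s}$, which is strictly stronger than what's asked; taking $\max_{\pi'}$ then finishes. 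Your handling of the degenerate case $(s,a_s)=(w,a_w)$, the observation that $\ppiysa[\pi']{w,a_w}{w,a_w}\ge 1-\gamma>0$, and the remark that the proof of \cref{thm:hitting_time_actions} also covers $s=w$ with $a_s\neq a_w$ (since the hitting time is then $\ge 1$) are all correct and necessary details. One thing worth flagging: the lemma's hypothesis ``and $\pi\in\spol$'' is vestigial in the statement as written (no $\pi$ appears in the formula), which suggests the authors may have intended a mixed-policy version mirroring \cref{thm:waypoint}; in that case the switching-policy argument would genuinely be needed, but for the lemma as actually stated, your streamlined proof is both valid and preferable.
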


The proof follows from the same argument as in \cref{thm:waypoint} but applying \cref{thm:hitting_time_actions} instead of \cref{thm:hitting_time}.

\section{Proof of \cref{thm:quasimetric}}
\label{app:quasimetric}

\quasimetric*

\begin{proof}
    We check the conditions of \Cref{def:quasimetric}:
    \begin{parts}

        \item[Positivity:] Applying \Cref{thm:hitting_time}, we see \begin{align*}
            \dss(s,g) & =\min_{\pi\in\spol} \log \ppiys{g}{g} - \log \ppiys{g}{s}                           \\
                      & = \min_{\pi\in\spol} \log \ppiys{g}{g} - \log \ex[][\big]{\y^{\hit{g}}}\ppiys{g}{g} \\
                      & \geq \min_{\pi\in\spol} \log \ppiys{g}{g} - \log \ppiys{g}{g}                       \\
                      & = 0.
        \end{align*}

        \item[Identity:] We see $\dss(s,g)=0$ precisely iff $\ppiys{g}{g}=\ppiys{g}{s}$ for some $\pi\in\spol$. This holds when $s=g$. For $s\neq g$, we
        have $\ppiys{g}{s} \leq \gamma  \ppiys{g}{g}$. Since $\ppiys{g}{g} \geq 1-\y$ by construction, $\dss(s,g)\neq 0$.

        \item[Triangle inequality:]
        We see: \begin{align*}
            \dss(s,g) & = \min_{\pi\in\spol} \log \ppiys{g}{g} - \log \ppiys{g}{s}
            \\
                      & \leq \min_{\pi\in\spol} \log \ppiys{g}{g} - \log\paren[\bigg]{\max_{\pi'\in\spol}
            \bracket[\bigg]{\frac{\ppiys{g}{w}\ppiys[\pi']{w}{s}}{\ppiys[\pi']{w}{w}}}} \tag{\Cref{thm:waypoint}} \\
                      & = \min_{\pi\in\spol} \log \ppiys{g}{g} -
            \max_{\pi'\in\spol}\log\paren[\bigg]{{\frac{\ppiys{g}{w}\ppiyps{w}{s}}{\ppiyps{w}{w}}}}
            \\
                      & = \paren*{\min_{\pi\in\spol} \log \frac{\ppiys{g}{g} }{
            \ppiys{g}{w}}}-\paren[\bigg]{{\max_{\pi'\in\spol}\log{\frac{\ppiyps{w}{s}}{\ppiyps{w}{w}}}}}          \\
                      & = \paren*{\min_{\pi\in\spol} \log \frac{\ppiys{g}{g} }{ \ppiys{g}{w}}}
            +\paren[\bigg]{{\min_{\pi'\in\spol}
                        \log{\frac{\ppiyps{w}{w}}{\ppiyps{w}{s}}}}}
            \\
                      & = \dss(w,g) + \dss(s,w)
            \eqmark\label{eq:waypoint_elbo}
        \end{align*}
        as desired.
    \end{parts}
\end{proof}

\label{app:quasimetric_corollary}

Consider the following didactic example for why we might want to extend the successor distance to the state-action space $\S\x\A$
(Eq.~\ref{fig:sa_metric}).

\begin{figure}[H]
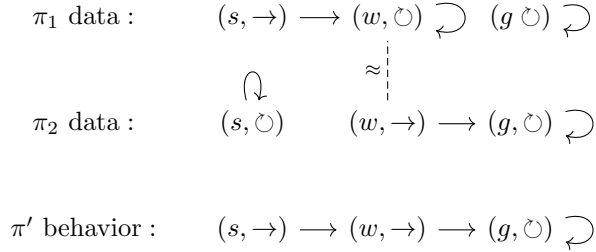

    \centering
    \includestandalone{images/sa-dist-example}
    \caption{A simple illustration of a metric over $\S\times\A$.
        To stitch the behavior $s\to w$ from $\pi_1$ and $w\to g$ from $\pi_2$ to the behavior $s\to g$ that is possible under some policy $\pi$', we
        enforce an additional constraint that distances to $(w,\to)$ are the same as distances to $(w,\circlearrowright)$.}
    \label{fig:sa_metric}
\end{figure}

\csname quasimetric_corollary\endcsname*

This statement follows from the same argument as in \cref{thm:quasimetric} but applying \cref{thm:sa_waypoint} instead of \cref{thm:waypoint} to the triangle inequality.

\csname quasimetric_uncontrolled_corollary\endcsname*

This statement follows from \cref{thm:quasimetric} by taking $\A=\{a\}$ so $\Pi = \{\pi\}$.

\section{Analysis of \alg-1}
\label{app:analysis_cmdonestep}
\cmdonestep*
\begin{proof}
    \cref{eq:successor_diff} together with \cref{eq:successor_diff_2}  tell us that, if $f(s, a, g)$ satisfies \cref{eq:contrastive_opt}, then the learned $\dmd(s, a, g)$ is the successor distance. What remains is to show that the parameterization in~\cref{eq:critic_reparam} is sufficient to represent \cref{eq:contrastive_opt}: \cref{eq:successor_decomposition} tells us that it is sufficient for (1) we use a universal quasimetric network for $\dmd(s, a, g)$~\citep{liu2023metric, wang2022improved}, and (2) use a universal  network  $c_\psi(g)$ (e.g., sufficient layers in a neural network~\citep{hornik1989multilayer}).
\end{proof}

\section{Implementation Details}
\label{app:implementation}

We implement CMD, CRL (CPC / NCE), and GCBC using JAX  building upon the official codebase of contrastive RL~\citep{eysenbach2022contrastive}.
For the QRL baseline, we use the implementation provided by the author~\citep{wang2022learning}.
Whenever possible, we used the same hyperparameters as contrastive RL~\citep{eysenbach2022contrastive} and match the number of parameters in the model for different algorithms.
We used 4 layers of 512 units of MLP as our neural network architectures and set batch size to 256.
We find that using a smaller learning rate $5\cdot 10^{-6}$ for the contrastive network is useful for improving performance.
In light of \cref{thm:sa_indep}, when learning the $\dss$ critic in \cref{eq:onestep_unique,eq:dsd_upper_bound}, we use a dummy action  $a'$ sampled from the marginal distribution over geometrically-discounted future actions.

We compared approaches in the offline settings across the best performance from 500k steps of training, consistent with past work \cite{zheng2023contrastive,eysenbach2022contrastive}.
All approaches were tested with similar model sizes and runtime, and used tuned hyperparameters. Our code at \url{https://github.com/mnm-anonymous/qmd} features the precise configurations for the experiments.

\section{Worked Examples}
\label{app:examples}

We present a few examples of how the successor distance defined in \cref{eq:basic_quasimetric_definition_uncontrolled} yields a valid quasimetric.

\subsection{Example 1: 3-state Markov Process.}

\begin{center}
    \begin{tikzcd}
        1 \rar & 2 \rar & 3 \arrow[loop,looseness=5]
    \end{tikzcd}
\end{center}

\cref{eq:succesor_density}
Assume that the initial state is ``1'', a discount factor of $\gamma$, and that state ``3'' is absorbing. We assume that the discounted state occupancy measure states at $t = 0$, so that it includes the current time step.

\begin{align*}
    p(3 \mid 3)       & = 1                                                                                                          \\
    p(2 \mid 2)       & = 1 - \gamma                                                                                                 \\
    p(2 \mid 1)       & = \gamma (1 - \gamma)                                                                                        \\
    p(3 \mid 2)       & = \gamma                                                                                                     \\
    p(3 \mid a)       & = \gamma^2                                                                                                   \\
    d(1, 3)           & = \log p(3 \mid 3) - \log p(3 \mid 1) = \log 1 - \log \gamma^2 = 0 + 2 \log \tfrac{1}{\gamma}                \\
    d(1, 2)           & = \log p(2 \mid 2) - \log p(2 \mid 1) = \log (1 - \gamma) - \log \gamma(1 - \gamma) = \log \tfrac{1}{\gamma} \\
    d(2, 3)           & = \log p(3 \mid 3) - \log p(3 \mid 2) = \log 1 - \log \gamma = \log \tfrac{1}{\gamma}                        \\
    d(1, 2) + d(2, 3) & = 2 \log \tfrac{1}{\gamma} \ge d(1, 3) = 2 \log \tfrac{1}{\gamma}. \checkmark
\end{align*}
In this example, note that the triangle inequality is tight. This is because there is a single state that we are guaranteed to visit between states ``1'' and ``3.''

\subsection{Example 2: 4-state Markov Process.}

\begin{center}
    \begin{tikzcd}
        1 \rar & \begin{matrix}
            2 \\ 3
        \end{matrix} \rar & 4 \arrow[loop,looseness=5]
    \end{tikzcd}
\end{center}

From state ``1'', states ``2'' and ``3'' each occur with probability 0.5.
\begin{align*}
    p(4 \mid 4)       & = 1                                                                                    \\
    p(2 \mid 2)       & = 1 - \gamma                                                                           \\
    p(2 \mid 1)       & = \tfrac{1}{2}(1 - \gamma)\gamma                                                       \\
    p(4 \mid 1)       & = \gamma^2                                                                             \\
    p(4 \mid 2)       & = \gamma                                                                               \\
    d(1, 2)           & = \log p(2 \mid 2) - \log p(2 \mid 1)                                                  \\&= \log (1 - \gamma) - \log \tfrac{1}{2}(1 -\gamma)\gamma = \log \tfrac{1}{\gamma} + \log 2\\
    d(2, 4)           & = \log p(4 \mid 4) - \log p(4 \mid 2)                                                  \\&= \log 1 - \log \gamma = \log \tfrac{1}{\gamma} \\
    d(1, 4)           & = \log p(4 \mid 4) - \log p(1 \mid 4)                                                  \\&= \log 1 - \log \gamma^2 = 2 \log \tfrac{1}{\gamma} \\
    d(1, 2) + d(2, 4) & = 2 \log \tfrac{1}{\gamma} + \log 2 \ge d(1, 4) = 2 \log \tfrac{1}{\gamma}. \checkmark
\end{align*}
In this example, the triangle inequality is loose. This is because we have uncertainty over which states we will visit between ``1'' and ``4.'' One
way to resolve this uncertainty is to aggregate states ``2'' and ``3'' together; if we did this, we'd be back at example 1, where the triangle
inequality is tight.

\newpage

\section{Action-Invariance}
\label{app:action_invariance}

Let's assume that data are collected with a Markovian policy, so $p(s', a' \mid s, a) = \beta(a' \mid s')p(s' \mid s, a)$. Then CRL will learn
\begin{align}
    e^{f(s, a, s', a')} & = \frac{p(s', a' \mid s, a)}{p(s', a')}                                              \\
                        & = \frac{\cancel{\beta(a' \mid s')}p(s' \mid s, a)}{\cancel{\beta(a' \mid s')}p(s')}.
\end{align}
Thus, if data are collected with a Markovian policy, then the optimal critic will not depend on the future actions.
Note that this remains true for any parameterization of the critic (including MRN) that can represent the optimal critic.

However, the assumption on a Markovian data collection policy can be violated in a few ways:
\begin{enumerate}
    \item In the online setting, data are collected from policies at different iterations. In this setting, conditioning on a previous state and action can give you a better prediction of $a'$ (violating the Markov assumption) because it can allow you to infer which policy you're using.
    \item In goal-conditioned settings, the data collection policy is conditioned on the goal. Conditioning on a previous state and action can leak information about the desired goal.
\end{enumerate}

One way of fixing this is to apply CRL to a different data distribution. Let $p(s', a' \mid s, a)$ be given, and let $\beta(a)$ be some distribution over actions (in practice, we might use the marginal distribution over actions in the dataset). Define
\begin{align}
    \tilde{p}(s', a' \mid s, a) \triangleq p(s' \mid s, a)\beta(a'), \qquad \tilde{p}(s', a') \triangleq p(s')\beta(a').
\end{align}
In practice, this corresponds to augmenting the CRL training examples $(s, a, s', a') \rightarrow (s, a, s', \tilde{a}')$ by resampling the future actions. Now, consider applying CRL to this new distribution:
\begin{align}
    e^{f(s, a, s', a')} & = \frac{\tilde{p}(s', a' \mid s, a)}{\tilde{p}(s', a')}                                              \\
                        & = \frac{\cancel{\beta(a' \mid s')}\tilde{p}(s' \mid s, a)}{\cancel{\beta(a' \mid s')}\tilde{p}(s')}.
\end{align}
Thus, if we apply CRL to data augmented in this way, we're guaranteed to learn a critic function $f(s, a, s', a')$ that is invariant to $a'$.

\end{document}